\newcommand{\row}{\operatorname{Row}}
\newcommand{\col}{\operatorname{Col}}
\newcommand{\nul}{\operatorname{Null}}
\newcommand{\bra}[1]{\left[#1\right]}
\newcommand{\pa}[1]{\left(#1\right)}
\newcommand{\set}[1]{\left\{ #1 \right\}}
\newcommand{\ex}[1]{\mathbb{E}\bra{#1}}
\newcommand{\B}[1]{\mathbf{\boldsymbol{#1}}}
\newcommand{\cyc}[1]{\left\langle #1 \right\rangle}
\newcommand{\norm}[1]{\left\lVert#1\right\rVert}
\newcommand{\appropto}{\mathrel{\vcenter{
  \offinterlineskip\halign{\hfil$##$\cr
    \propto\cr\noalign{\kern2pt}\sim\cr\noalign{\kern-2pt}}}}}
\title{Recommending with Recommendations}
\author{%
 Naveen Durvasula\thanks{The first two authors contributed equally} \\
 University of California, Berkeley\\
 Department of EECS\\
 Berkeley, CA 94704 \\
 \texttt{ndurvasula@berkeley.edu} \\
 \And
 Franklyn Wang$^{*}$ \\
 Harvard University \\
 Department of Mathematics\\
 Cambridge, MA 02138 \\
 \texttt{franklyn\_wang@college.harvard.edu} \\
 \And
 Scott Duke Kominers\\
 Harvard Business School \\
 Boston, MA 02138 \\
 \texttt{kominers@fas.harvard.edu} \\
}
\begin{document}

\maketitle

\begin{abstract}

Recommendation systems are a key modern application of machine learning, but they have the downside that they often draw upon sensitive user information in making their predictions. We show how to address this deficiency by basing a service's recommendation engine upon recommendations from other existing services, which contain no sensitive information by nature. Specifically, we introduce a contextual multi-armed bandit recommendation framework where  the agent has access to recommendations for other services. In our setting, the user's (potentially sensitive) information belongs to a high-dimensional latent space, and the ideal recommendations for the source and target tasks (which are non-sensitive) are given by unknown linear transformations of the user information. So long as the tasks rely on similar segments of the user information, we can decompose the target recommendation problem into systematic components that can be derived from the source recommendations, and idiosyncratic components that are user-specific and cannot be derived from the source, but have significantly lower dimensionality. We propose an explore-then-refine approach to learning and utilizing this decomposition; then using ideas from perturbation theory and statistical concentration of measure, we prove our algorithm achieves regret comparable to a strong skyline that has full knowledge of the source and target transformations. We also consider a generalization of our algorithm to a model with many simultaneous targets and no source. Our methods obtain superior empirical results on synthetic benchmarks.

\end{abstract}

\pdfoutput=1

\section{Introduction}

Recommendation engines enable access to a range of products by helping consumers efficiently explore options for everything from restaurants to movies. But these systems typically rely on fine-grained user data---which both necessitates users' sharing such data, and makes it difficult for platforms without existing data sources to enter the space. In this paper, we propose a new approach to recommendation system design that partially mitigates both of these challenges: \textit{basing recommendations on existing recommendations developed by other services}. We call the new service's  recommendation task the \emph{target}, and call the prior recommendations we rely on the \emph{source}. Source recommendations are always available to the user, and are sometimes even public. We show that as long as the source and target recommendation tasks are sufficiently related, users can enable services to make high-quality recommendations for them without giving away any new personal data. Thus our method offers a middle ground between consumers' current options of either giving platforms personal data directly or receiving low-quality recommendations: under our system, consumers can receive high-quality recommendations while only sharing existing recommendations based on data that other services already possess.

For intuition, consider the following example which we use throughout the paper for illustration: There is an existing grocery delivery service, which has become effective at recommending grocery items to its users. A new takeout delivery service is seeking to recommend restaurants to the same set of users. Users' preferences over groceries and restaurant food rely on many of the same underlying user characteristics: for example, a user who loves pasta will buy it at grocery stores \textit{and}  patronize Italian restaurants; meanwhile, a user who is lactose intolerant will prefer not to buy milk products \textit{and} prefer to avoid ice cream parlors. Thus, high-quality grocery recommendations provide valuable data upon which we could base restaurant recommendations. 

From the user perspective, a restaurant recommendation system based on existing grocery recommendations would be able to provide high-quality restaurant suggestions immediately, rather than needing to learn the user's preferences over time. And yet the user would not need to share any data with the restaurant recommendation system other than the grocery recommendations themselves---which by nature contain (weakly) less user private information than they were derived from. Crucially, the user's food preferences might theoretically be driven by a health condition like lactose intolerance, but they could also just be a matter of taste (not liking milk). In this case, the grocery delivery platform's recommendations reflect that preference information in a way that is useful for extrapolating restaurant recommendations, without directly revealing the potentially sensitive underlying source of the user's preferences.

And while these food recommendation instances are a bit niche, there are many settings in which recommendations built on personal data may be available, even if we do not have access to the personal data itself. Search engines, for example, utilize user data to serve advertisements, 
and at least two notable projects---the Ad Observatory project \cite{noauthor_ad_nodate} and the Citizen Browser \cite{noauthor_citizen_nodate}---have automatically logged social media advertisements that were served to users. Such advertisements could be used as contextual input for a variety of recommendation problems.

In our framework, we %
consider an agent that aims to make recommendations for a collection of users over a collection of \textit{target} tasks. Each task takes the form of a standard stochastic linear bandit problem, where the reward parameter for user $u$ is given by a task-specific linear transformation of $u$'s high-dimensional personal data vector. We think of these transformations as converting the personal data into some lower-dimensional features that are relevant for the task at hand. These transformations and the personal data vectors are unknown to the agent. However, the agent is given access to the optimal arms (and its final reward) for each of the users on a \textit{source} task. No other ``white-box'' information about the source task (e.g., the reward history for the agent that solved the task) is known. That is, only the recommendations themselves---which are served to (and thus may be captured by) the user---in addition to the final reward (which may also be captured by asking the user to rate the recommendation) are accessible to the agent.

We describe algorithms that allow the agent to effectively make use of this information to reduce regret, so long as the source and target task transformations make use of segments of the high-dimensional personal data similar to those used in the source task. 
Notably, such a recommendation system functions without the user providing any personal information to the agent directly---and moreover, to the extent that the source task abstracts from sensitive data, all of the target task recommendations do as well.  

Formally, we set up two related contextual bandit problems for the recommendation problem. When we have one target and many users, we can decompose each user's reward parameter into idiosyncratic components, which are unique to each user, and systematic components, which are shared among all users. If the source and target tasks utilize similar segments of the users' personal data, the idiosyncratic component is low-dimensional. As a result, if $0 \le \kappa \le 1$ is the shared information between the subspaces, our regret bounds improve upon the LinUCB baseline \cite{LinUCB} by a factor of $1 - \kappa$.

In our second iteration, we consider the general problem of having many targets and many users, but no sources. We show that if we perform what is essentially an alternating minimization algorithm to solve for a few tasks at first, use the solutions to solve for all the users, and then use those to solve for all the tasks, we obtain strong experimental performance. We call our collective methods \textbf{Rec2}, for \emph{recommending with recommendations}. 

In both analyses, we make a crucial \emph{user growth assumption}, which states that in several initial phases, we have beta groups where only some subset of the tasks and some subset of the users can appear. This assumption enables us to estimate systematic parameters accurately, as there are fewer idiosyncratic parameters. It also reflects how a web platform grows.

\Cref{sec:prelims} introduces our formal model and compares it to prior work. 
In \Cref{sec:model2}, we describe our approach for inferring the relationship between the source and target tasks when we have source recommendations for multiple users. We then develop and analyze an algorithm that uses this approach to reduce regret. In \Cref{sec:model3}, we generalize our algorithm from \Cref{sec:model2} to our second setting where we have multiple target tasks. In \Cref{sec:exp}, we demonstrate our methods in synthetic experiments. Finally, \Cref{sec:concl} concludes.

\pdfoutput=1

\section{Preliminaries}\label{sec:prelims}

\subsection{Framework} \label{sec:prelims-model}
In our setup, we aim to solve $T$ \textit{target} tasks for $U$ \textit{users}. We are also given the optimal arms (and corresponding expected reward) for each user on a \textit{source} task.

Formally, we associate to each user $u \in [U]$ a vector $\B \theta^u \in \R^n$ denoting all of the user's data, some of which is potentially sensitive. We think of this space as being extremely high-dimensional---i.e., $n \gg 1$---and encapsulating all of the salient attributes about the user. We associate to the source task a matrix $A \in \R^{a \times n}$ and similarly associate to each target task $t \in [T]$ a matrix $B_t \in \R^{b \times n}$. The parameters $a, b \le n$ denote the dimensionality of the source target tasks, respectively.\footnote{The assumption that all target tasks have the same dimensionality is made for expositional simplicity and can be relaxed.} We assume that the task matrices $A$ and $B_t$ do not contain any redundant information, and thus have full row rank.  These task matrices transform a user's personal data into the reward parameter for the task. For example, if a takeout delivery service (given by target task $t$) aims to make recommendations for some user $u$, the task matrix $B_t$ transforms $u$'s personal data into the space of possible restaurants in such a way that the vector $B_t \B \theta^u$ essentially denotes $u$'s ideal restaurant choice. 

\begin{figure}[tbp]
    \centering
    \scalebox{0.5}{\pdfoutput=1

\tdplotsetmaincoords{80}{10}
\begin{tikzpicture}[tdplot_main_coords]
\tdplotsetrotatedcoords{0}{1}{0}
\begin{scope}[tdplot_rotated_coords]
\coordinate (O) at (0,0,0);
\filldraw[
        draw=green,%
        fill=green!40,%
    ] (-4,-3,3) -- (-4,3,3) -- (4,3,3) -- (4,-3,3) -- cycle;

\path (0,0,3) node {$\B \theta^u$};
\path (6,3.5,3) node {$\R^n$ -- User Data};
\path (6,3.5,1) node {$\R^r$ -- Low Rank Model};
\path (6,3.5,.6) node {\small e.g. Food Preferences};
\draw[dashed, ->] (0,0,2.8) -- (-1.5, 0, 0);
\draw[dashed, ->] (0,0,2.8) -- (1.5, 0, 0);
\path (-3,1,-1) node {$A^+A \B \theta^u$};
\path (2.5,1,-1) node {$B^+B \B \theta^u$};
\draw[dashed, ->] (-1.5, 0, 0) -- (-3, 0, -3.5);
\draw[dashed, ->] (1.5, 0, 0) -- (3, 0, -3.5);
\draw[thick] (-4,4,1) -- (-4,-4,1) -- (4,-4,1) -- (4,4,1) -- cycle;
\draw[thick, dashed] (-4,4,-1) -- (-4,4,1);
\draw[thick] (4,4,-1) -- (4,4,1);
\draw[thick] (4,-4,-1) -- (4,-4,1);
\draw[thick] (-4,-4,-1) -- (-4,-4,1);
\draw[thick, dashed] (-4,4,-1) -- (-4,-4,-1);
\draw[thick, dashed] (4,4,-1) -- (-4,4,-1);
\draw[thick] (4,-4,-1) -- (4,4,-1);
\draw[thick] (4,-4,-1) -- (-4,-4,-1);
\path (-3.5,0,-2.5) node {$A \B \theta^u$};
\path (3.5,0,-2.5) node {$B \B \theta^u$};
\path (-6,0,0) node {$\row(A) \cong \R^a$};
\path (6,0,0) node {$\row(B) \cong \R^b$};
\path (-6.5,0,-2.5) node {$\col(A) \cong \R^a$};
\path (-6.7,0,-2.9) node {\small e.g. Grocery Recs};
\path (6.5,0,-2.5) node {$\col(B) \cong \R^b$};
\path (7.2,0,-2.9) node {\small e.g. Restaurant Recs};
\filldraw[
        draw=blue,%
        fill=blue!40,
        opacity=0.7,%
    ] (-.75,-3,-.5) -- (-.75,3,-.5) -- (3.25,3,.5) -- (3.25,-3,.5) -- cycle;

\filldraw[
        draw=red,%
        fill=red!40,%
        opacity=0.7,
    ] (-3.25,-3,.5) -- (-3.25,3,.5) -- (.75,3,-.5) -- (.75,-3,-.5) -- cycle;
    
\filldraw[
        draw=blue,%
        fill=blue!40,
        opacity=0.7,%
    ] (1,-3,-3.5) -- (1,3,-3.5) -- (5,3,-3.5) -- (5,-3,-3.5) -- cycle;
    
\filldraw[
        draw=red,%
        fill=red!40,%
        opacity=0.7,
    ] (-5,-3,-3.5) -- (-5,3,-3.5) -- (-1,3,-3.5) -- (-1,-3,-3.5) -- cycle;
\end{scope}
\end{tikzpicture}}
    \caption{\textbf{Visualizing source and target tasks.} In our model, users' personal data $\B \theta^u$ lie in a latent space $\R^n$. For a given source matrix $A$ and target matrix $B$, the image $A \B \theta^u \in \R^a$ gives the reward parameter for the source task, and $B \B \theta^u \in \R^b$ gives the reward parameter for the target task. Thus, $\row(A) \subseteq \R^n$ and $\row(B) \subseteq \R^n$ can be thought of as the corresponding partitions of the full personal data that are used by source and target tasks respectively.  If the source and target tasks make use of similar segments of the users' personal data---that is, if $\row(A) \cap \row(B)$ has high dimension---then, we may construct a low rank model (boxed) to represent both the source and target tasks. In this case, having knowledge of $A \B \theta^u$ may help in recovering $B \B \theta^u$.}
    \label{fig:projection}
\end{figure}
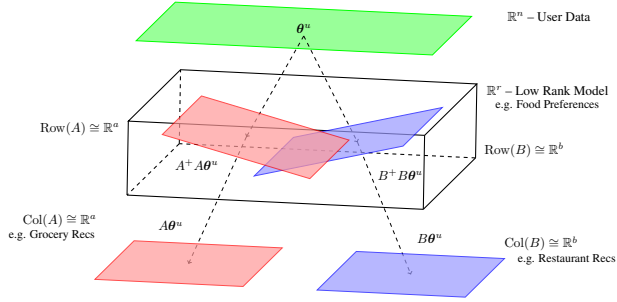

We now consider a multi-armed bandit problem with finite horizon $H$. At each time $h \in [H]$, the agent observes a \textit{context} consisting of a user-task pair $(u_h,t_h) \in [U] \times [T]$, and subsequently chooses an arm $\B x$ from a finite and time-dependent decision set $\mc A_h$. We then observe the reward
\begin{equation}\label{eq:reward}
    X_h = \cyc{\B x , B_{t_h} \B \theta^{u_h}} + \eta_h,
\end{equation}
where $\{\eta_h\}_{h=1}^{H}$ are i.i.d., mean-$0$, independent $1$-subgaussian random variables. We assume that the decision sets are well-conditioned, such that for any arm $\B x \in \mc A_h$, $\norm{B_{t} \B \theta^{u}} \le 1$ for all $t,u$. We also assume $\max_{h \in [H], \B x \in \mc A_h} \norm{\B x}_2 \le 1$, so each arm has $2$-norm at most $1$, and that $\norm{\B \theta^{u}} \le 1$ for all~$u$.

Crucially, in our setup, the agent is \textit{not} given access to any of the personal data vectors $\B \theta^u$, or any of the source/task matrices $A$ or $B_t$. The agent is, however, given access to each of the optimal arms $\B \alpha^u$ and corresponding reward $\beta^u$ for an agent that solved the source task. These quantities can be used to approximate the source reward parameter $A \B \theta^u$ under reasonable assumptions on the source agent's arm space. Namely, we assume that the source agent's arm space is a net of some ball of radius $S$, whence we may write $\B \alpha^u \approx S\frac{A \B \theta^u}{\norm{A \B \theta^u}_2}$. The corresponding expected reward, as per (\ref{eq:reward}), is then approximately given by $\beta^u \approx S\norm{A \B \theta^u}_2$. We may then approximate $A \B \theta^u \approx \frac{\beta^u \B \alpha^u}{\norm{\B \alpha^u}_2^2}$; in the remainder of this paper we assume, for simplicity, that this approximation is exact.    

As usual, the objective of the agent is to maximize the expected cumulative reward $\sum_{h=1}^H\ex{X_h}$, or equivalently minimize the expected regret
\begin{equation}
    R_H := \ex{\sum_{t \in [T]} \sum_{u \in [U]} \sum_{h : (u_h, t_h) = (u,t)} \max_{\B x \in \mc A_h}\cyc{\B x, B_t\B \theta^u} - \sum_{h=1}^H X_h}.
\end{equation}
 Our central contribution in this paper is demonstrating how knowledge of the optimal arms for the each of the source tasks may be used to significantly reduce regret relative to the naive approach solving each of the $U \cdot T$ contexts separately. Using the popular LinUCB algorithm \cite{li2010contextual}, we obtain the following baseline regret bound. (All proofs are in \Cref{app:proofs}.)

\begin{theorem}[LinUCB Bound]\label{thm:linucbtrivial}
A LinUCB agent that naively solves each of the $U\cdot T$ tasks separately has regret bounded by
\[R_H \le O\pa{b \sqrt{HTU} \log\pa{\frac{H}{TU}}}. \]
\end{theorem}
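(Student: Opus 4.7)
The plan is to treat each of the $UT$ user-task pairs $(u,t)$ as an independent $b$-dimensional stochastic linear bandit and then aggregate the per-context regret bounds.

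For a fixed context $(u,t)$, on rounds $h$ with $(u_h, t_h) = (u,t)$, the reward is exactly $X_h = \langle \B x, B_t \B \theta^u \rangle + \eta_h$, which fits the standard stochastic linear bandit template with unknown parameter $B_t \B \theta^u \in \R^b$ and $1$-subgaussian noise. Because the naive agent does not share information across contexts, each context can be analyzed in isolation. Letting $H_{u,t} := |\{h : (u_h, t_h) = (u,t)\}|$, the classical LinUCB/OFUL bound (e.g., Abbasi-Yadkori, P\'al, Szepesv\'ari 2011) yields a per-context regret of order $O(b \sqrt{H_{u,t}} \log H_{u,t})$, invoking the norm bounds $\norm{\B x}_2 \le 1$ and $\norm{B_t \B \theta^u}_2 \le 1$ from the setup to absorb confidence constants into the $O(\cdot)$.

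Next I would sum across the $UT$ contexts. Using $\sum_{(u,t)} H_{u,t} = H$, Cauchy--Schwarz (equivalently Jensen's inequality for the concave map $\sqrt{\cdot}$) gives $\sum_{(u,t)} \sqrt{H_{u,t}} \le \sqrt{UT} \cdot \sqrt{H} = \sqrt{UTH}$. For the logarithmic factor, I would argue that the worst-case allocation of $H$ across the $UT$ contexts for the functional $\sum \sqrt{H_{u,t}} \log H_{u,t}$ is the uniform split $H_{u,t} = H/(UT)$: on the region $H_{u,t} \ge 1$ each summand is concave (since $\frac{d^2}{dx^2}[\sqrt{x}\log x] = -\log(x)/(4 x^{3/2}) \le 0$ for $x \ge 1$), so by symmetry and concavity the interior maximizer over the simplex is the symmetric point, at which each $\log H_{u,t} = \log(H/(UT))$. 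Combining these two estimates yields $R_H \le O(b \sqrt{HTU} \log(H/(TU)))$.

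The main obstacle is just the second estimate: obtaining the sharpened $\log(H/(TU))$ factor rather than the looser $\log H$ one, which requires the Lagrangian/concavity argument sketched above (and a separate check for the degenerate regime $H < UT$, where fewer than $UT$ contexts are visited at all and the bound is trivially weaker in the intended form). Everything else is a direct invocation of the single-task LinUCB analysis plus a routine Cauchy--Schwarz aggregation; no additional probabilistic machinery is needed since the contexts are fully decoupled under the naive algorithm.
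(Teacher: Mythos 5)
Your proposal is correct and follows essentially the same route as the paper: a per-context invocation of the standard LinUCB regret bound for a $b$-dimensional linear bandit, followed by a Jensen/concavity argument showing the sum $\sum_{(u,t)} \sqrt{H_{u,t}}\log H_{u,t}$ is maximized at the uniform allocation $H_{u,t}=H/(UT)$. Your explicit second-derivative check and the caveat about the regime $H<UT$ are details the paper elides, but the argument is the same.
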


An interpretation of \cref{thm:linucbtrivial} is that on average, each user receives an arm with reward at most $ R_H / H = b\sqrt{\frac{TU}{H}} \log\pa{\frac{H}{TU}}$ worse than the best, meaning that the problem is harder when there are more tasks, more users, and higher dimension, but becomes easier with time.

\subsection{Related Work}\label{sec:prelims-comparisons}

\paragraph{Recommendation Systems}
Recommendation systems---which recommend items to users, given data on users, items, and/or their pairwise interactions---have been one of the most impactful applications of modern machine learning \cite{bobadilla2013recommender}. Moreover, the study of recommendation systems  has led to a number of foundational methodological breakthroughs \cite{candes2009exact, koren2009matrix}.

\paragraph{Contextual Bandits} Our framework is an example of a setting with \textit{contextual bandits} \cite{abe2003reinforcement}, i.e., one in which bandits are allowed to draw upon information beyond the arms themselves---the \textit{context}---to help make their decisions. Moreover, our framework follows that of a linear bandit. However, our framework differs from the standard model  of contextual linear bandits \cite[Equation 19.1]{lattimore2020bandit}, which represents the rewards as  $ X_h = \langle \psi(c, \B x), \B \theta^{\ast} \rangle, $
where $\psi$ is a known feature map and $\B \theta^{\ast}$ is an unknown reward parameter. While our question can be fit into that standard framework by using \begin{gather*} \psi((u, t), \B x) = \begin{bmatrix} 0 & \ldots &  \underbrace{\B x^{\top}}_{(u, t)^{\text{th}} \text{ vector }} & \ldots & 0 \end{bmatrix}^{\top}\quad\text{and} \\ \B \theta^{\ast} = \begin{bmatrix} (B_1\B \theta^1)^{\top} & \ldots & (B_t\B \theta^u)^{\top} & \ldots & (B_T\B \theta^U)^{\top} \end{bmatrix}^{\top}, 
\end{gather*} such a setup does not allow for us to easily incorporate our information about $A\B \theta^u$ into the problem, as it merely views $\B \theta^{\ast}$ as an unrolling of a vector. In particular, the direct bounds obtained through the standard contextual bandits formulation (without further assumptions) are weak relative to ours, as $\B \theta^{\ast}$ has dimension $bTU$. 

\paragraph{Explore and Refine Approaches}
Our algorithm in \Cref{sec:model2} is methodologically similar to the algorithm of \cite{jun2019bilinear}, which also explores to obtain a subspace containing the solution, and imposes a soft penalty to make the estimated reward parameter close to that subspace. Indeed, our approach draws upon their LowOFUL algorithm.  %
However, the problem we seek to solve is quite different from that of \cite{jun2019bilinear}---we have either multiple tasks or side information in each of our models.

\paragraph{Cross-Domain Recommendations}
Cross-Domain recommendations (see \cite{zhu2021cross} for a survey) is an area where data from other recommendation domains is used to inform recommendations in a new domain. Here we highlight works in cross-domain recommendations that we find particularly relevant. 

\cite{hu2013personalized} makes a low-rank assumption on the tensor of item-user-domain, which is similar to but differs from our assumption, as our assumption does not make any assumptions regarding the quantity of items, and hence about rank constraints along that axis. Furthermore, those works recommend items, whereas our work chooses from a set of arms---the featurizations in our model are provided ahead of time, whereas in \cite{hu2013personalized} they are learnt.

\cite{winoto2008if} uses a similar cross-domain recommendation idea; but they use explicit relationships between items as hinted by the title (i.e. aficionados of the book "The Devil Wears Prada" should also enjoy the movie "The Devil Wears Prada"), which our work does not depend on. Our work makes purely low-rank based assumptions, making it more general than \cite{winoto2008if}.

\paragraph{Meta-Learning}

In addition, our setup resembles the standard meta-learning \cite{schmidhuber1987evolutionary, bengio1990learning} setup, as we have several interrelated tasks (namely, each (service, user) pair can be viewed as a separate task). In \cite{cella2020meta}, the authors address a meta-learning task by assuming that the task matrices come from a simple distribution, which are then learnt. One way to think of this work is that it corresponds to ours in the case where $U = 1$. As such, it cannot share parameters across users, and thus must make distributional assumptions on the tasks to get nontrivial guarantees. 

\paragraph{Transfer Learning and Domain Adaptation} 
Our setup also resembles that of transfer learning, as we use parameters from one learned task directly in another task, as well as domain adaptation, because we use similar models across two different settings. Transfer learning has commonly used in the bandit setting \cite{azar2013sequential, calandriello2014sparse, zhang2017transfer, deshmukh2017multi}.

Most relevant to our work, the model we present in \Cref{sec:model2} is similar to the setting of \cite{liu2018transferable}, in which the target features are a linear transformation away from the source features, and the bandit has access to historical trajectories of the source task. One way to interpret this in our model is that the work assumes knowledge of the full set of features $\B \theta$ (as opposed to merely a projection), and thus strives to solve for $B$. It then solves for $B$ by applying linear regression to align the contexts, obtaining a nontrivial regret guarantee. However, the result is not directly comparable, because the model contains substantially more information than ours, namely access to the full $\B \theta$ instead of simply a view $A \B \theta$ as well as past trajectories on a bandit problem.

The paper \cite{kanagawa2019cross} uses domain adaptation to approach cross-domain recommendation and shows that it can achieve superior performance to certain baselines. However, it is not clear how to implement this in a bandit-based framework.

\paragraph{User-similarity for recommendation}  In works like \cite{cesa2013gang, yang2020laplacian, soare2014multi, gentile2014online}, the authors address recommendation problems by using information about similar users, like users in a social network. They assume that people who are connected on social media have similar parameter vectors. However, this assumption may potentially violate privacy by assuming access to a user's contacts, and by using many tasks (as these works effectively have $T = 1$) and many users we can obtain similar results without using social data.

\pdfoutput=1

\section{One Target, One Source, Many Users}\label{sec:model2}
In this section, we consider the case where we have $T = 1$ target task. We show that source recommendations can be used to generate target recommendations in two phases. We first show how an agent with full knowledge of the source and target matrices can reduce regret using the source recommendations by reducing the dimensionality of target recommendation problem. We then show that agent without knowledge of the source and target matrices can still reduce regret under a realistic assumption on the ordering of the contexts by \textit{learning} the underlying relationship between the source and target matrices from observed data. One quantity that relates to the success of our approach is the \textit{common information}, which we define as 
\begin{equation}
    \kappa := \frac{\dim \pa{\row(A) \cap \row(B)}}{b}.
\end{equation}

This quantity can be thought of as the fraction of information about the underlying reward parameter $B \B \theta$ that can be possibly be explained given the source information $A \B \theta$. Referring back to \Cref{fig:projection}, we can see that if $\kappa \approx 1$, then the target task uses segments of the user's information that are almost all contained by the segments used by the source task  -- a situation that arises often in practice. Stated differently, this implies that the source and target matrices $A$ and $B$ have a low rank structure. To see this, observe that the rank of the block matrix $r := \operatorname{rank}\pa{\bmat{A \\ B}}$ is given by
\[r = a + b - \dim (\row(A) \cap \row(B)) = a + (1 - \kappa)b.\]

In our running example, the matrix $A$ represents a grocery recommendation task and the matrix $B$ represents a restaurant recommendation task. Our algorithm finds the minimal set of information needed to predict both tasks, and then predicts the user's restaurant preferences by looking at all possibilities that are consistent with the user's grocery preferences. 

\paragraph{Skyline Regret Bound}\label{sec:model2-skyline} 

We now show how an agent that was given the source and target matrices $A$ and $B$ may use this structure to reduce regret. By the above, we may write the compact singular value decomposition of the stacked matrix as $\bmat{A \\ B} = \mathcal{U} \Sigma \mathcal{V}^T$, where $\mathcal{U}, \mc{V} \in \R^{(a+b) \times r}$ and $\Sigma \in \R^{r \times r}$. Projecting onto the principal components, we can create a low-dimensional generative model
\[\bmat{A \B \theta^u \\ B \B \theta^u} = \mc{U} \pa{\Sigma \mc V^T \B \theta^u} := \mc{U} \B \phi^u,\]
where $\B \phi^u \in \R^r$. Letting $\pi_A \in \R^{a \times(a + b)}$ denote the orthogonal projector onto the first $a$ components and $\pi_B \in \R^{b \times(a + b)}$ denote the orthogonal projector onto the last $b$ components, we have that $\pi_A \mc{U} \B \phi^u = A \B \theta^u$, whence we may write $\B \phi^u \in \pa{\pi_A \mc{U}}^+ A \B \theta^u + \nul(A)$.
Here, $\pa{\pi_A \mc{U}}^+$ denotes the Moore-Penrose pseudo-inverse. Substituting in, we may write
\begin{equation}\label{eq:decomp}
B \B \theta^u = \pi_B \mc{U} \B \phi^u \in \pi_B \mc{U} \pa{\pi_A \mc{U}}^+ A \B \theta^u + \pi_B \mc{U} \nul(A)
.\end{equation}
We call $D_T := \pi_B \mc{U} \B \phi^u \in \pi_B \mc{U} \pa{\pi_A \mc{U}}^+ \in \R^{b \times a}$ the \textit{transformer}, as it transforms the known reward parameter from the source domain into the target domain. From the above, we have that the residual $B \B \theta^u - D_TA \B \theta^u$ lies in $\pi_B \mc{U} \nul(A)$, which is a space of dimension $(1 - \kappa)b$. Choosing an orthogonal basis for this space, it follows that we may write
\begin{equation}\label{eq:normal}
    B \B \theta^u = \underbrace{D_T A \B \theta^u}_{\text{Systematic Component}} + \underbrace{D_G \B \psi^u}_{\text{Idiosyncratic Component}},
\end{equation}
where $D_G \in \R^{b \times (1 - \kappa)b}$ has orthogonal columns given by the basis elements, and $\B \psi^u \in \R^{(1 - \kappa)b}$ gives the corresponding weights. We call $D_G$ the \textit{generator}, as it generates the second term in the above expression from a lower dimensional vector $\B \psi^u$. Together $D_T$ and $D_G$ give a useful decomposition of the target reward parameter into a \textit{systematic component} that can be inferred from the source reward parameter, and an \textit{idiosyncratic component} that must be learned independently for each user. An agent with full knowledge of the source and task matrices can compute $D_G$ and $D_T$ as given above, and thus needs only solve for each of the $\B \psi^u$ that lie in a $(1-\kappa)b$-dimensional subspace. As a result, it attains a lower skyline regret bound.

\begin{theorem}[Skyline Regret]\label{lem:model2skyline}
An agent with full knowledge of the source and target matrices $A$ and $B$ may attain regret
\[R_H \le O\pa{(1 - \kappa)b \sqrt{H} \log\pa{\frac{H}{U}}}. \]
\end{theorem}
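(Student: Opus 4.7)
The plan is to use the decomposition in~(\ref{eq:normal}) to recast each user's subproblem as a linear bandit of effective dimension $(1-\kappa)b$, and then rerun the algebra behind Theorem~\ref{thm:linucbtrivial} with this smaller dimension in place of $b$. Because the agent knows $A$ and $B$, it can form the SVD of the stacked matrix, the transformer $D_T = \pi_B \mc{U}(\pi_A \mc{U})^+$, and any orthonormal basis $D_G$ for $\pi_B \mc{U}\,\nul(A)$. Combined with the known source recommendation $A\B\theta^u$, the agent can evaluate the systematic term $D_T A\B\theta^u$ exactly; the only remaining unknown is the idiosyncratic coordinate $\B\psi^u \in \R^{(1-\kappa)b}$.

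First, I would rewrite the reward in~(\ref{eq:reward}) as $X_h = \cyc{\B x_h, D_T A\B\theta^{u_h}} + \cyc{D_G^\tp \B x_h, \B\psi^{u_h}} + \eta_h$. Defining the shifted observation $\tilde X_h := X_h - \cyc{\B x_h, D_T A\B\theta^{u_h}}$, the agent obtains a standard linear bandit reward in feature $D_G^\tp \B x_h$ with unknown parameter $\B\psi^{u_h}$. The noise is unchanged and remains $1$-subgaussian; the orthonormality of the columns of $D_G$ yields $\norm{D_G^\tp \B x_h} \le \norm{\B x_h} \le 1$; and the assumed bounds on $B\B\theta^u$ together with $\norm{D_T A\B\theta^u} \le O(1)$ give $\norm{\B\psi^u} \le O(1)$. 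Hence every structural hypothesis required by the LinUCB analysis is inherited by the reduced problem.

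Next, I would run one LinUCB instance per user on the reduced problem, selecting at each round the arm $\B x \in \mc A_h$ that maximizes the known systematic term $\cyc{\B x, D_T A\B\theta^{u_h}}$ plus the LinUCB upper-confidence index for $\cyc{D_G^\tp \B x, \B\psi^{u_h}}$. Because the systematic term is added deterministically, optimism in the reduced coordinates transfers to optimism for the full reward, so the standard LinUCB argument applies verbatim. Rerunning the aggregation used to derive Theorem~\ref{thm:linucbtrivial} with $T=1$ and effective per-user dimension $(1-\kappa)b$ in place of $b$ then yields the claimed $(1-\kappa)b \sqrt{H}\log(H/U)$ bound.

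The main obstacle is the translation step: carefully checking that the shifted, projected problem satisfies all of the LinUCB assumptions (noise model, bounded features, bounded parameter) and that the optimistic choice in the reduced coordinates remains optimistic for the full reward. Once those facts are in place, the only quantitative change from the baseline in Theorem~\ref{thm:linucbtrivial} is the replacement of the ambient dimension $b$ by the dimension $(1-\kappa)b$ of the subspace in which $\B\psi^u$ lives, which is exactly where the $(1-\kappa)$ factor in the stated bound comes from.
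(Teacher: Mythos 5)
Your proposal is correct and follows essentially the same route as the paper's proof: rewrite the reward via the decomposition in~(\ref{eq:normal}), select arms by adding the known systematic term $\cyc{\B x, D_T A\B\theta^{u}}$ to a LinUCB upper-confidence index for $\cyc{D_G^\tp\B x, \B\psi^{u}}$, verify optimism transfers and that $\norm{D_G^\tp\B x}\le 1$, and then aggregate over users with the same Jensen argument as in Theorem~\ref{thm:linucbtrivial} with dimension $(1-\kappa)b$.
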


Comparing to Theorem \ref{thm:linucbtrivial}, we see that we obtain an identical bound, but with a factor of $1-\kappa$ in front. Thus, as alluded to earlier, the improvement depends on how large $\kappa$ is. If $\kappa$ is close to one, then we obtain a significant improvement, corresponding to being able to use a substantial amount of information from the source recommendations. This result only applies when the agent is given the source and target matrices. 

\paragraph{Learning from Observed Data}

We now return to our original setup, where the source and target matrices are unknown. We show how the transformer $D_T$ and the generator $D_G$ may be learned from observed data under a realistic assumption on the ordering of the contexts. We assume that some subset of the users $U_0$ belongs to a \textit{beta group}, and are the only users requesting recommendations for the first $H_0$ iterations. After the first $H_0$ iterations, all users are allowed to request recommendations for the target task. This assumption makes learning the transformer and generator easier as it reduces the number of idiosyncratic parameters we must fully learn in order to understand the relationship between the two tasks.

Our approach is briefly summarized as follows. In the first $H_0$ iterations, we use some \textit{exploration policy} that independently pulls arms for each user. In practice, this policy could be adaptive, like the baseline LinUCB algorithm. To keep our analysis tractable, we instead assume that the exploration policy is \textit{oblivious}. That is, we assume that arms pulled in the first $H_0$ iterations do not depend on any observed rewards, and that the arms are randomly drawn from $\frac{1}{\sqrt{b}}\mc{N}(0,I_b)$. We further assume that the same oblivious policy is used across users. This is essentially equivalent to using different oblivious strategies for different users, as we have no prior information on the reward parameters. In our experiments, we show that this obliviousness assumption does not appear necessary to achieve good performance. We find that adaptive policies (namely LinUCB) are just as, if not more effective at learning the model relating the two tasks, while achieving performance equivalent to the baseline during the exploration phase.

Immediately after iteration $H_0$, we run least-squares for each user on the arm-reward history we have observed thus far to obtain estimates $\widehat{B \B \theta^u}$ for each user $u$. Observe by \Cref{eq:decomp} that the transformer $D_T$ is determined by the left singular vectors $U$. As $\nul(A) = \nul(\pi_A \mc{U})$, $D_G$ may also be computed given $U$. We compute the singular value decomposition

\begin{equation}\label{eq:stack}
     \bmat{A \B \theta^1 &  \hdots & A \B \theta^{U_0}\\ \widehat {B \B \theta^1} & \hdots & \widehat{ B \B \theta^{U_0}}} = \bmat{\widehat{\mc{U}} & \widetilde{\mc{U}}} \bmat{\Sigma_r & 0 \\ 0 & \Sigma_{\kappa b}} \bmat{\widehat{\mc{V}^T} \\ \widetilde{\mc{V}^T}};
\end{equation}
we then use $\widehat{\mc{U}}$, which corresponds to the first $r$ columns of the decomposition, as a proxy for $U$.

Note that in the skyline, we showed that $B\B \theta^u$ lies in the subspace $
    S_u := \set{\pi_B \mc{U} \B \phi \mid \pi_A \mc{U} \B \phi = A \B \theta^u, \B \phi \in \R^r}.
$

By substituting $\widehat{\mc{U}}$ for $\mc{U}$, and subsequently computing $\widehat D_T$ and $\widehat D_G$ as in \Cref{eq:decomp}, we obtain the approximation to $S_u$ given by
\begin{align*}
    \widehat S_u &:= \set{\pi_B \widehat{\mc{U}} \B \phi \mid \pi_A \widehat{\mc{U}} \B \phi = A \B \theta^u, \B \phi \in \R^r}\\
    &\subset \operatorname{span}\pa{\widehat D_T A \B \theta^u, \widehat D_G} =: E_u.
\end{align*}

Although $\widehat S_u$ is an affine space, it is a subset of the linear space $E_u$. If $B \B \theta^u$ is close to the space $E_u$ (i.e. if our approximation is close) then in the coordinates of some orthogonal basis $W := \bmat{W_{\parallel} & W_\perp}$ for $\R^b$, where $W_{\parallel}$ is a basis for $E_u$, $B \B \theta^u$ is approximately sparse (i.e. components of $B \B \theta^u$ that lie outside of $E_u$ have small magnitude). We invoke an algorithm known as LowOFUL \cite{jun2019bilinear}, which works in precisely this setting to select arms for all iterations $h \in [H_0 + 1, H]$. We give a full description of our algorithm in Algorithm~\ref{alg:model2}. More concretely, the regret for our approach may be decomposed as
\begin{align*}
    R_H &= \ex{\sum_{u \in [U]} \sum_{h \le  H_0 : u_h = u} \pa{\max_{\B x \in \mc A_h} \cyc{\B x, B\B \theta^u} - X_h}}\\
    & \quad + \sum_{u \in [U]} \ex{\sum_{h > H_0 : u_h = u} \pa{\max_{\B x \in W^T \mc A_h} \cyc{\B x,  W^T B\B \theta^u} - X_h}}\\
    &=: \mathrm{Explore}_{H_0} + \sum_{u \in [U]} \mathrm{LowOFUL}^u_{H_u}
\end{align*}
where $H_u$ is the number of times user $u$ requests a recommendation. Our approach achieves regret comparable to the skyline agent that has knowledge of the source and target matrices. Each term $\mathrm{LowOFUL}^u_{H_u}$ is bounded in terms of the magnitude of the ``almost sparse'' components $L^u := \norm{(W^TB \B \theta^u)_{(1-\kappa)b+1:b}}_2$. With methods from perturbation theory, we show:
\begin{lemma}\label{lem:closetospace} With probability $1 - \delta$, we have that for all $u$,
\[ L^u \le  \sqrt{2} \pa{1 + \pa{\sigma_{\mathrm{min}}(\pi_A \mc{U}) - \sqrt{2}Z}^{-1}}Z , \] where $q = \frac{\sqrt{bU_0(b + \log(2/\delta))}}{\sqrt{H_0 / U_0} - C(\sqrt{b} + \sqrt{\log(4 / \delta)})} \approx \frac{bU_0}{\sqrt{H_0} - C\sqrt{U_0b}}$; $Z = \frac{q}{\sigma_{\mathrm{min}}\pa{M} - q}$; $M = \tiny{\bmat{A \B \theta^1 &  \hdots & A \B \theta^{U_0}\\ B \B \theta^1 & \hdots &  B \B \theta^{U_0}}}$; $C$ is some absolute constant; and we assume that $\sigma_{\mathrm{min}}\pa{M} > q$.
\end{lemma}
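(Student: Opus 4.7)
The plan is to bound $L^u = \operatorname{dist}(B\B\theta^u, E_u)$ in three stages: first control the OLS estimation error $\|\widehat M - M\|_F$ produced in the exploration phase, then translate this into a singular-subspace perturbation bound on $\widehat{\mc U}$ via Wedin, and finally propagate that perturbation through the construction of $\widehat D_T$ and $\widehat D_G$ to bound the distance from $B\B\theta^u$ to $E_u$.

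First I would bound $\|\widehat M - M\|_F$. For each $u \in [U_0]$ the design matrix $X_u \in \R^{n_u \times b}$ has rows drawn iid from $\frac{1}{\sqrt b}\mc N(0,I_b)$ with $n_u \approx H_0/U_0$. A standard non-asymptotic Gaussian singular value bound gives $\sigma_{\min}(X_u) \gtrsim (\sqrt{n_u} - C(\sqrt b + \sqrt{\log(1/\delta)}))/\sqrt b$ with high probability, and the subgaussian OLS noise analysis then yields a per-user bound $\|\widehat{B\B\theta^u} - B\B\theta^u\|_2 \lesssim \sqrt{b(b + \log(1/\delta))}/(\sqrt{H_0/U_0} - C(\sqrt b + \sqrt{\log}))$. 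Union-bounding over the $U_0$ users and summing in Frobenius norm produces $\|\widehat M - M\|_F \le q$ with probability $\ge 1 - \delta/2$. Writing $\widehat M = M + E$ with $\|E\|_{op} \le \|E\|_F \le q < \sigma_{\min}(M)$, Wedin's theorem yields an orthogonal $O \in \R^{r \times r}$ with $\|\widehat{\mc U} - \mc U O\|_{op} \le \sqrt{2}\,q/(\sigma_{\min}(M) - q) = \sqrt 2\, Z$. Since $E_u$ depends on $\widehat{\mc U}$ only through its column span, $O$ can be absorbed and we may work as if $\|\widehat{\mc U} - \mc U\|_{op} \le \sqrt 2\, Z$.

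Next I would bound the distance itself. By the skyline decomposition $B\B\theta^u = \pi_B \mc U \B\phi^u$ for some $\B\phi^u$ satisfying $\pi_A \mc U \B\phi^u = A\B\theta^u$ with $\|\B\phi^u\| \le 1$. Pick any $\widehat{\B\phi}^u$ with $\pi_A \widehat{\mc U}\widehat{\B\phi}^u = A\B\theta^u$, so that $\pi_B \widehat{\mc U}\widehat{\B\phi}^u \in \widehat S_u \subseteq E_u$. Decompose
\[
B\B\theta^u - \pi_B \widehat{\mc U}\widehat{\B\phi}^u = \pi_B(\mc U - \widehat{\mc U})\B\phi^u + \pi_B \widehat{\mc U}(\B\phi^u - \widehat{\B\phi}^u).
\]
The first term is bounded by $\|\mc U - \widehat{\mc U}\|_{op}\|\B\phi^u\| \le \sqrt 2\, Z$. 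For the second, subtracting the two constraint equations gives $\pi_A \widehat{\mc U}(\widehat{\B\phi}^u - \B\phi^u) = (\pi_A \mc U - \pi_A \widehat{\mc U})\B\phi^u$; choosing the minimum-norm solution and applying Weyl's inequality yields $\|\widehat{\B\phi}^u - \B\phi^u\| \le \sqrt 2 Z/\sigma_{\min}(\pi_A \widehat{\mc U}) \le \sqrt 2 Z/(\sigma_{\min}(\pi_A \mc U) - \sqrt 2 Z)$. Combining gives exactly $L^u \le \sqrt 2\, Z\pa{1 + \pa{\sigma_{\min}(\pi_A \mc U) - \sqrt 2 Z}^{-1}}$.

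The principal obstacle is the rotational ambiguity in Step 2: Wedin only determines $\widehat{\mc U}$ up to an orthogonal $O$, so we must verify that $E_u = \operatorname{span}(\widehat D_T A\B\theta^u, \widehat D_G)$ — built from $\pi_B \widehat{\mc U}(\pi_A \widehat{\mc U})^+$ and $\pi_B \widehat{\mc U}\,\nul(A)$ — is invariant under replacing $\widehat{\mc U}$ by $\widehat{\mc U} O^\top$, which follows because both $\widehat D_T A\B\theta^u$ and $\operatorname{col}(\widehat D_G)$ are images of the column span of $\widehat{\mc U}$ and so depend only on that span. Beyond this, the remaining subtlety is keeping the scheduling of users across the first $H_0$ rounds from degrading the per-user counts $n_u$; since the stated $q$ only involves the average $H_0/U_0$, one either invokes a balancedness assumption on the exploration schedule or pays a logarithmic factor via a concentration argument on the user arrivals.
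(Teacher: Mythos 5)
Your proposal follows essentially the same route as the paper's proof: bound $\Vert\widehat{M}-M\Vert$ by $q$, convert this to a $\sqrt{2}Z$ perturbation of $\widehat{\mc U}$ via a sin-theta/Wedin bound, construct the explicit point $\pi_B\widehat{\mc U}\widehat{\B\phi}^u\in\widehat S_u$ with the same two-term decomposition and the same Weyl-based control of $\sigma_{\min}(\pi_A\widehat{\mc U})$, and finish by identifying $L^u$ with the distance to $E_u$. The only differences are cosmetic (per-user OLS plus a Frobenius union bound instead of the paper's shared-design $\varepsilon$-net argument for $\Vert E\Vert$, and absorbing the orthogonal $O$ by span-invariance rather than carrying it explicitly), so the argument is correct and matches the paper.
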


We now apply the original LowOFUL \cite[Corollary 1]{jun2019bilinear} regret bound to show:

\begin{theorem}\label{thm:giga} With an oblivious exploration policy that draws identical arms for each user from $\frac{1}{\sqrt{b}}\mc{N}(0,I_b)$,   assuming that each user appears equally many times, with probability at least $1 - \pa{U\delta' + \delta}$, we have
\begin{align*}
    &\sum_{u \in [U]} \mathrm{LowOFUL}^u_{H_u} 
    \le\\
    &\quad \quad \quad  O\biggr(\underbrace{(1-\kappa)b \sqrt{UH}\log\pa{\frac{H}{U}}\sqrt{\log (1 / \delta')}}_{\text{Skyline Regret (adjusted by $\delta'$)}}\\
    & \quad \quad \quad + \underbrace{H \pa{1 + \pa{\sigma_{\mathrm{min}}(\pi_A \mc{U}) - \sqrt{2}Z}^{-1}}Z \log\pa{\frac{H}{U}}}_{\text{Approximation Error from Using $\widehat{S}_u$}} \biggr ).
\end{align*}
\end{theorem}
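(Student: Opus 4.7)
The plan is to apply the LowOFUL regret guarantee of \cite{jun2019bilinear} on a per-user basis, using \Cref{lem:closetospace} to uniformly control the tail magnitudes $L^u$ across users, and then to sum the per-user regrets under the equal-frequency assumption.

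First, condition on the event from \Cref{lem:closetospace}, which occurs with probability at least $1-\delta$ and gives the uniform bound
\[L^u \le \mc{L} := \sqrt{2}\pa{1 + \pa{\sigma_{\min}\pa{\pi_A \mc{U}} - \sqrt{2}Z}^{-1}} Z \qquad \text{for all } u \in [U].\]
On this event, each user $u$'s target reward parameter, expressed in the user-specific orthogonal basis $W^u = \bmat{W^u_\parallel & W^u_\perp}$ where $W^u_\parallel$ spans $E_u$, is approximately sparse: the first $(1-\kappa)b + 1 = O((1-\kappa)b)$ coordinates carry essentially all of its mass, while the trailing $\kappa b - 1$ coordinates have $\ell_2$-norm at most $\mc{L}$. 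Because $W^u$ is orthogonal, the change of basis preserves the arm-norm bounds on $\mc A_h$ and on $W^{u,\top} B \B \theta^u$, so running LowOFUL in the transformed coordinates is legitimate.

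Next, for each user $u$, invoke the LowOFUL regret bound \cite[Corollary 1]{jun2019bilinear} with parallel dimension $d_\parallel = O((1-\kappa)b)$, tail budget $\mc{L}$, horizon $H_u$, and failure probability $\delta'$. With the standard tuning of the perpendicular-block regularizer in terms of $\mc{L}$, this yields, with probability at least $1-\delta'$,
\[\mathrm{LowOFUL}^u_{H_u} \le O\pa{(1-\kappa)b\, \sqrt{H_u}\, \log(H_u)\, \sqrt{\log(1/\delta')} \;+\; \mc{L}\, H_u\, \log(H_u)}.\]
Union bounding over the $U$ users shows all per-user bounds hold simultaneously with probability at least $1-U\delta'$; combining with the event from \Cref{lem:closetospace} yields total success probability at least $1-(U\delta' + \delta)$. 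Finally, under the equal-frequency assumption $H_u = H/U$, so summing over users gives
\begin{align*}
\sum_{u\in[U]} \mathrm{LowOFUL}^u_{H_u}
&\le O\pa{U \cdot (1-\kappa)b\, \sqrt{H/U}\, \log(H/U)\, \sqrt{\log(1/\delta')} + U \cdot \mc{L}\, (H/U) \log(H/U)} \\
&= O\pa{(1-\kappa)b\, \sqrt{UH}\, \log(H/U)\, \sqrt{\log(1/\delta')} + \mc{L}\, H\, \log(H/U)},
\end{align*}
which, after substituting the definition of $\mc{L}$ (and absorbing the $\sqrt{2}$ into the $O(\cdot)$), matches the claimed bound.

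The main obstacle is matching the hypotheses of the LowOFUL corollary to our per-user setup: we must check that (i) the orthogonal change of basis $W^u$ preserves the arm and parameter norm bounds (immediate), (ii) the user-specific ``parallel'' subspace $E_u = \operatorname{span}(\widehat{D}_T A \B \theta^u, \widehat{D}_G)$ can be identified before the LowOFUL phase begins (which it can, since $A \B \theta^u$ is known for every user and $\widehat{D}_T, \widehat{D}_G$ are computed from the exploration-phase estimates via the SVD in \eqref{eq:stack}), and (iii) the regularization weight on $W^u_\perp$ in LowOFUL is set as a function of $\mc{L}$ so that the perpendicular-block contribution collapses into the single term $\mc{L}\, H_u \log(H_u)$ rather than a larger expression involving $\sqrt{\kappa b}$. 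Everything else is a union bound and arithmetic.
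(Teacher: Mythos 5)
Your proposal is correct and follows essentially the same route as the paper: condition on the event of \Cref{lem:closetospace} to uniformly bound the tail magnitudes $L^u$, apply the LowOFUL guarantee of \cite{jun2019bilinear} per user with parallel dimension $O((1-\kappa)b)$ and failure probability $\delta'$, union bound over the $U$ users, and sum under $H_u = H/U$. The only cosmetic difference is that the paper tunes $\lambda_{\perp}$ independently of $L^u$ (with $L^u$ entering only through $\sqrt{\beta}$) rather than ``as a function of $\mc{L}$,'' but the resulting $\mc{L}\,H\log(H/U)$ term is identical.
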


Observe that the first sum term in this bound essentially matches the skyline bound (\Cref{lem:model2skyline}), and has regret scaled by $1-\kappa$. We pick up an additional penalty term given by the second summand that comes from the approximation error in $\mc{U}$. Looking more closely at the term $Z$, our bound predicts that we obtain more regret when the number of users in the beta group $U_0$ increases as we have to learn more idiosyncratic components, justifying the user growth assumption, and less regret as the time $H_0$ we have to learn the idiosyncratic components increases,  Finally, when $H_0 = cH$ for some constant $c$ and $U \rightarrow \infty$ while holding $U_0$ fixed, the second term is $o(1)$ times the first term.

\pdfoutput=1

\section{No Sources, Many Targets, Many Users}\label{sec:model3}

The previous section showed that we can obtain learning guarantees from a source. In this section, we address that case where we have no sources but many targets -- the reward at time $h$ is given by $X_h = \cyc{\B x, B_{t_h} \B \theta^{u_h}} + \eta_h$, where $t$ and $u$ index the task and user respectively. As before, we compare to a LinUCB baseline (\Cref{thm:linucbtrivial}) that uses one bandit for each of the $U \cdot T$ contexts. In the previous section, we found that source tasks with low rank $r$ (or equivalently high common information $\kappa$) were effective in producing recommendations for the target task. We similarly assume in this setting that there exist user-specific vectors $\B \phi^u \in \R^r$ of dimension $r \ll n$ along with matrices $\mc U_1, \dots,\mc U_T$ such that $B_t \B \theta^u = \mc U_t \B \phi^u$ for all $u$ and $t$. 
 
In our running example, the matrices $B_t$ represent many different food delivery apps, perhaps focused on different  food sources. Our low-rank assumption posits that there is substantial redundancy between the different tasks (in the example: user-specific food preferences $\B \phi^u$ are consistent across apps). One benefit is that after many users has been solved for, solving new tasks is very easy because we have data on a large set of users. 

\paragraph{Skyline Regret Bound}

In the skyline, we assume full access to the task matrices -- we show that in this case, we obtain a reduction to $U$ separate dimension-$r$ bandit problems, one for each user.

\begin{lemma}[Skyline Bound]\label{lem:model3skyline}
With access to all task matrices, we obtain a regret of \[ R_H \le O\pa{r\sqrt{UH}\log\pa{\frac{H}{U}}} .\]
\end{lemma}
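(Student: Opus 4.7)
The plan is to leverage the shared low-rank factorization $B_t \B \theta^u = \mc{U}_t \B \phi^u$ to decompose the multi-task multi-user problem into $U$ independent $r$-dimensional linear bandits, one per user, and then invoke the baseline LinUCB bound (\Cref{thm:linucbtrivial}) on each. With access to the task matrices (and the asserted rank-$r$ structure), the agent can fix a factorization $\set{\mc{U}_t}_{t \in [T]}$ once and for all. For any context $(u_h, t_h)$ and arm $\B x \in \mc A_h$, the expected reward equals
\[ \cyc{\B x, B_{t_h} \B \theta^{u_h}} = \cyc{\mc{U}_{t_h}^{\top} \B x, \B \phi^{u_h}}, \]
so the subsequence of time steps on which a fixed user $u$ appears constitutes a standard stochastic linear bandit in $\R^r$ with unknown parameter $\B \phi^u$ and effective arms $\mc{U}_{t_h}^{\top} \B x$. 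Because the users share no parameters once we condition on the $\mc{U}_t$'s, their problems decouple completely.

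The norm conditions transfer cleanly: $\norm{\mc{U}_t \B \phi^u} = \norm{B_t \B \theta^u} \le 1$ and $\norm{\B x} \le 1$, so the rescaled per-user problem fits the LinUCB template in dimension $r$. Writing $H_u$ for the number of queries involving user $u$, \Cref{thm:linucbtrivial} (instantiated with $U = T = 1$ and ambient dimension $r$) gives per-user regret $O\pa{r\sqrt{H_u} \log H_u}$. Summing over users and applying Cauchy--Schwarz, $\sum_u \sqrt{H_u} \le \sqrt{U \sum_u H_u} = \sqrt{UH}$; under the paper's implicit balanced-arrivals assumption (each user appears $\Theta(H/U)$ times) the logarithmic factor collapses to $\log(H/U)$, yielding $R_H \le O\pa{r\sqrt{UH}\log\pa{H/U}}$ as claimed.

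The main obstacle is the bookkeeping in the reduction: one must verify that the confidence-ellipsoid and self-normalized concentration arguments behind LinUCB still deliver the advertised rate on the \emph{transformed} problem. This reduces to checking that the effective arm set $\set{\mc{U}_{t_h}^{\top} \B x : \B x \in \mc A_h}$ is rich enough that the per-user design matrix accumulates a growing minimum eigenvalue at the standard rate---equivalently, that the original $\mc A_h$ excite all $r$ coordinates of $\B \phi^u$---which follows from the well-conditioning hypothesis on the decision sets already imposed in \Cref{sec:prelims-model}. The only other non-mechanical step is collapsing the per-user $\log H_u$ terms into $\log(H/U)$, which follows from Jensen's inequality (concavity of $x \mapsto \sqrt{x}\log x$) together with the constraint $\sum_u H_u = H$.
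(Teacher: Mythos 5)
Your proposal is correct and follows essentially the same route as the paper's proof: factor $B_t \B\theta^u = \mc U_t \B\phi^u$, transform the arms by $\mc U_{t_h}^\top$ so that each user becomes a single $r$-dimensional linear bandit, and then sum the per-user LinUCB bounds via Jensen/Cauchy--Schwarz to get $O\pa{r\sqrt{UH}\log(H/U)}$. The only superfluous worry is your remark about the design matrix needing a growing minimum eigenvalue---the LinUCB/OFUL regret bound does not require that, so nothing extra needs to be checked there.
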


\paragraph{An Algorithm based on User and Task Growth}

Our algorithm in the previous section required the existence of a beta group of users to learn the relationship between the source and target. In this multitask setting, we additionally require a a beta group of tasks. We first apply an exploration strategy as in Section~\ref{sec:model2} to make estimates $\widehat{B_t\B \theta^{u}}$ for a small number of users $U_0$ and tasks $T_0$. We subsequently compute a singular value decomposition, as in \Cref{eq:stack}, to produce estimates $\widehat{\mc U_t}$ for $t \in [T_0]$. Next, we let all $U$ users request recommendations on the $T_0$ tasks, and use the $\widehat{\mc U_t}$ to make estimates for the low rank latent parameters $\widehat{\B \phi^u}$ for all $u \in [U]$. We accomplish this by running least-squares on the arm-reward history, noting that for any arm $\B x$,
\[\cyc{\B x, B_t \B \theta^u} = \cyc{\B x, \mc U_t \B \phi^u} = \cyc{\pa{\mc U_t}^T \B x, \B \phi^u} \approx \cyc{\pa{\widehat{\mc U_t}}^T\B x, \B \phi^u}.\]

In the last phase, we let all $U$ users request recommendations on all $T$ tasks, and again use these approximations, in conjunction with the reward history, to learn all $\widehat{\mc U_t}$ for all $t \in [T]$ by least squares
\begin{align*}
    \cyc{\B x, B_t \B \theta^u} = \cyc{\B x, \mc U_t \B \phi^u} &= \cyc{\B x \pa{\B \phi^u}^\top, \operatorname{vec}\pa{\mc U_t}} \approx \cyc{\B x \pa{\widehat{\B \phi^u}}^\top, \operatorname{vec}\pa{\mc U_t}}.
\end{align*}

Observe that the sample complexity of learning the $\widehat{\mc U_t}$ is greatly reduced as we make use of \textit{all} estimates $\widehat{\B \phi^u}$ to estimate each $\widehat{\mc U_t}$. That is, no matter which user $u_h$ appears for a given task context $t_h$, we are able to collect some sample that helps in approximating $\widehat{\mc U_{t_h}}$, as we have access to a quality estimate of the low rank latent parameter $\widehat{\B \phi^{u_h}}$ for the user $u_h$. Although we do not give a theoretical analysis of this algorithm (for barriers described in \Cref{app:hard}) we show in synthetic experiments that our approach offers a clear advantage over the baseline approach of solving each of the $U \cdot T$ contexts separately. We give a full description of the algorithm in Algorithm~\ref{alg:model3}.

\pdfoutput=1

\section{Simulation Experiments}\label{sec:exp}

In this section, we present several simulation experiments to benchmark our approach.\footnote{
Ideally, we would also perform an experimental evaluation on real-world data. To accomplish this, however, we would need to obtain data on user behavior for the same collection of users over multiple services. The closest matches to this requirement are datasets such as Douban \cite{zhu2020deep} that show the action-reward history for a real cross-domain recommendation service. Unfortunately, although the Douban dataset does record user behavior over multiple services, the only actions that we know rewards for are those that were executed by the service. That is, each user only gave feedback for items recommended to them by the service, which in practice is a very small fraction of the total action space. Thus, this dataset cannot be used to assess our algorithms, since the recommendations our algorithms would make would  not coincide with those that received feedback in Douban.

} We tested \textbf{Rec2}'s regret, using a LinUCB implementation following that of \cite{wang2017factorization}.\footnote{\label{ssc:expdet} Each run of our algorithms takes at most five seconds on a 8-core 2.3GHz Intel Core i9 processor.} %
We provide the full experimental setup in \Cref{app:runtime}, only highlighting the most important details here.

\paragraph{Experiment with One Source, One Target, Many Users}\label{ssc:exp1}

In this experiment, we choose $\kappa = 0.9$, $a = 20$, $b = 20$, $U_0 = 25$, $U = 500$, and $|\mc{A}| = 40$, $H_0 = 2,000$ and $H = 8,000$. We generate $A$ and $B$ by letting $\B \phi^{u}$ be random $d$-dimensional vectors, where $d = a + b(1-\kappa)$, and $A$, $B$ are random $a \times d$ and $b \times d$-dimensional matrices.  As we can see, the results of our \textbf{Rec2} approaches greatly improve upon the baselines. Notably, while we only have bounds on the excess regret after exploration for the oblivious exploration policy, it appears that similar bounds hold for both the oblivious exploration policy and the LinUCB exploration policy. 

\paragraph{Experiment with No Sources, Many Targets and Many Users}\label{ssc:exp2}

In this experiment, we choose $b = 3$, $T = 30$, $U = 50$, $r = 6$, $|\mc{A}| = 40$, and $T_0 = 3$, $U_0 = 5$. We choose random vectors $\B \phi^u \in \R^{r}$, and each task matrix $B_i$ is a $b \times r$ matrix. Actions are random vectors of length $b$. We run Algorithm~\ref{alg:model3} for $H_0 = 1,000$ timesteps in the first phase, $H_1 - H_0 = 3,000$ timesteps in the second phase, and $H - H_1 = 9,000$ timesteps in the third phase and see that the results of Rec2 strongly outperform the baseline in the third phase.

\begin{figure}[t!]
    \centering
    \subfloat[\centering Results for one source, one target ]{{\includegraphics[width=6cm]{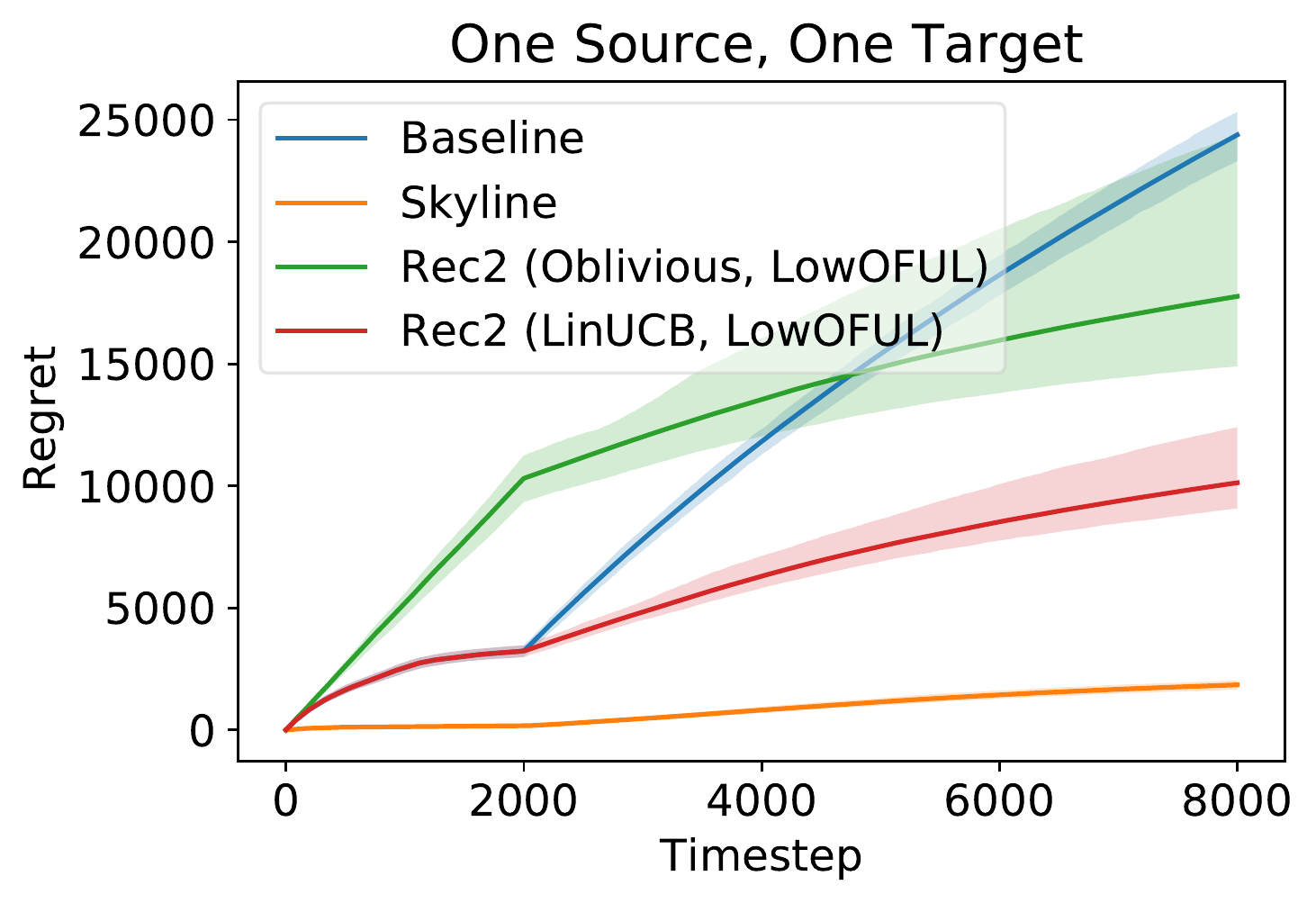} }}%
    \hfill
    \subfloat[\centering Results for no sources, many targets ]{{\includegraphics[width=6cm]{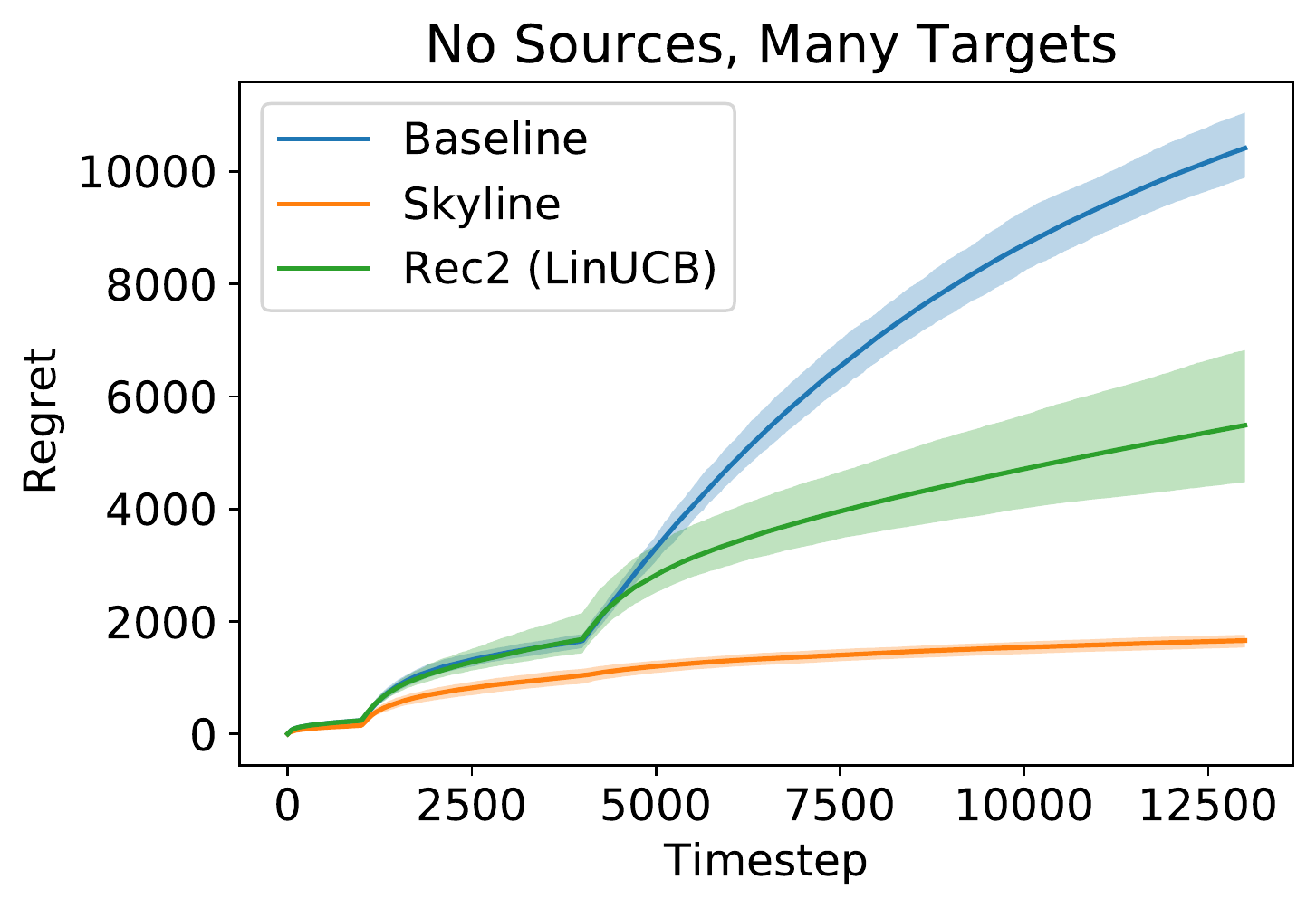} }}%
    \caption{\textbf{Average Regret, with 95\% confidence intervals shown. } Note that in (a) the post-exploration phase (after $H_0 = 2000$) in the combination of Oblivious and LowOFUL sharply bends downwards, as predicted by our theorem, yet somehow the LinUCB exploration policy appears to incur equal or even less regret after the exploration phase without sacrificing any regret in the exploration phase. In (b) it appears that while the \textbf{Rec2}  approach does not obtain lower regret in the first and second phases, the regret becomes sharply lower in the third phase.}
    \label{fig:mainfig}
\end{figure}

\pdfoutput=1

\section{Conclusion}\label{sec:concl}

We introduced a contextual multi-armed bandit recommendation framework under which existing recommendations   serve as input for new recommendation tasks. We show how to decompose the target problem into systematic and idiosyncratic components. The systematic component can be extrapolated from the source recommendations;  the idiosyncratic component must be learned directly but has significantly lower dimensionality than the original task. The decomposition itself can be learned and utilized by an algorithm that that executes a variant of LinUCB in a low-dimensional subspace, achieving regret comparable to a strong skyline bound. Our approach extends to a setting with simultaneous targets but no source, and the  methods perform well in simulation.

We believe that \emph{recommending with recommendations}  should make it possible to reduce recommendation systems' reliance on sensitive user data. Indeed, there are already large corpora of recommendations (in the form of advertisements) produced by search engines and social media platforms that have access to tremendous amounts of user information; these span a wide range of domains and thus could be used as source input in many different target tasks.\footnote{Ironically, in this sense, advertisements effectively act as a privacy filter -- they can actually serve as a fairly complete profile of a person, while abstracting from much of the underlying sensitive data.} Moreover, individual platforms such as grocery delivery services  are developing expertise in recommending products within specific domains; these could naturally be extrapolated to target tasks with some degree of overlap, such as the restaurant recommendation service in our running example.

Operationalizing the approach we provide here just requires a way for users to share the recommendations from one service with another. There are already public data sets for this in advertising domains, but in the future it could be accomplished more robustly either through a third-party tool that users can use to scrape and locally store their recommendations, or through direct API integrations with existing recommendation engines. But in any event, once a given user's existing recommendations are available, they can be loaded into a new service (with the user's consent) and then can instantaneously learn the systematic component and provide valuable recommendations using the approach we described here.

In addition to the overall \textbf{privacy} benefits of basing recommendations on existing recommendation sources rather than the underlying user data, our method \textbf{lowers barriers to entry} by creating a way for new services to bootstrap off of  other platforms' recommendations. Moreover, this approach to some extent \textbf{empowers users} by giving them a way to derive additional value from the recommendations that platforms have already generated from their data. %

\bibliographystyle{unsrt}
\bibliography{refs}
\appendix
\onecolumn
\section*{Supplementary Material}
\pdfoutput=1

\section{Proofs Omitted from the Main Text}\label{app:proofs}

\subsection{Proof of \Cref{thm:linucbtrivial}}
\label{proof:linucbtrivial}

\begin{theorem*}[LinUCB Bound]
A LinUCB agent that naively solves each of the $U\cdot T$ tasks separately has regret bounded by
\[R_H \le O\pa{b \sqrt{HTU} \log\pa{\frac{H}{TU}}}. \]
\end{theorem*}

We first restate the LinUCB regret bound. 

\begin{lemma}[{ \cite[Corollary 19.3]{lattimore2020bandit}}]
\label{lem:linucb}
Suppose that a LinUCB agent aims to solve a stochastic linear bandit problem with reward parameter $\B \theta$. If the agent's decision sets $\mc A_h$ for $h \in [H]$ satisfy
\[\max_{h \in [H]} \max_{\B x, \B y \in \mc A_h} \cyc{\B x - \B y, \B \theta} \le 1, \qquad \qquad \max_{h \in [H]} \max_{\B x \in \mc A_h} \norm{\B x}_2 \le 1,\]
then the regret at time $h$ is bounded by
\[R_h \le O\pa{b \sqrt{h} \log \pa{h}}.\]
\end{lemma}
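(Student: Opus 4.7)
The plan is to follow the standard LinUCB analysis (as in Abbasi-Yadkori--P\'al--Szepesv\'ari and reproduced in Lattimore--Szepesv\'ari, Chapter 19), since the lemma is precisely a textbook corollary. The algorithm maintains the regularized least squares estimate $\hat{\B \theta}_h = V_h^{-1}\sum_{s<h} \B x_s X_s$ with Gram matrix $V_h = \lambda I + \sum_{s<h}\B x_s \B x_s^\top$, and at each round plays $\B x_h = \arg\max_{\B x \in \mc A_h}\max_{\B \theta' \in \mc C_h}\cyc{\B x,\B \theta'}$ for a confidence set $\mc C_h = \{\B \theta' : \norm{\B \theta' - \hat{\B \theta}_h}_{V_h} \le \beta_h\}$.

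The first step is to construct the confidence set. Using the self-normalized martingale tail bound for $1$-subgaussian noise, we would show that, for a suitable choice $\beta_h = O\pa{\sqrt{b\log(h/\delta)}}$ and $\lambda = \Theta(1)$, the event $\mc E := \{\B \theta \in \mc C_h \text{ for all } h \in [H]\}$ holds with probability at least $1-\delta$. Taking $\delta = 1/H$ makes the failure contribute at most a constant to the expected regret thanks to the boundedness hypothesis $\max_{\B x,\B y \in \mc A_h}\cyc{\B x - \B y,\B \theta} \le 1$.

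The second step is to bound the instantaneous regret on $\mc E$. Let $\B x_h^* \in \arg\max_{\B x \in \mc A_h}\cyc{\B x,\B \theta}$ and let $\tilde{\B \theta}_h \in \mc C_h$ be the parameter witnessing the optimistic choice $\B x_h$. By optimism, $\cyc{\B x_h,\tilde{\B \theta}_h} \ge \cyc{\B x_h^*,\B \theta}$, so the per-step regret satisfies
\[
r_h = \cyc{\B x_h^* - \B x_h,\B \theta} \le \cyc{\B x_h, \tilde{\B \theta}_h - \B \theta} \le \norm{\B x_h}_{V_h^{-1}}\norm{\tilde{\B \theta}_h - \B \theta}_{V_h} \le 2\beta_h \norm{\B x_h}_{V_h^{-1}},
\]
by Cauchy--Schwarz in the $V_h$-norm and the triangle inequality inside $\mc C_h$.

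The third step is the elliptical potential lemma: $\sum_{s \le h}\min(1,\norm{\B x_s}_{V_s^{-1}}^2) \le 2b\log(1 + h/(b\lambda))$, which uses only $\norm{\B x_s}_2 \le 1$ and a telescoping determinant identity $\det V_{h+1} = \det V_h \cdot (1 + \norm{\B x_h}_{V_h^{-1}}^2)$. Clipping $r_h$ by $1$ (valid by the gap hypothesis) and applying Cauchy--Schwarz gives
\[
R_h \le \sqrt{h}\sqrt{\sum_{s\le h}\min(1, 4\beta_s^2\norm{\B x_s}_{V_s^{-1}}^2)} \le O\pa{\beta_h \sqrt{hb\log h}} = O\pa{b\sqrt{h}\log h},
\]
which is the desired bound. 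The main technical obstacle is the self-normalized tail inequality; once it is invoked as a black box, the rest is the optimism argument together with the elliptical potential estimate, and there are no surprises.
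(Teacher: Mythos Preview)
Your sketch is correct and follows exactly the standard LinUCB analysis from Abbasi-Yadkori--P\'al--Szepesv\'ari as presented in Lattimore--Szepesv\'ari, Chapter~19. Note that the paper does not prove this lemma at all: it simply restates the cited corollary as a black box, so your write-up is more detailed than what the paper provides but matches the textbook derivation being invoked.
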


In our setup, there are $U\cdot T$ contexts, corresponding to each user-task pair. We define $R_h^{u,t}$ to be the regret of a LinUCB agent that runs for $h$ iterations for the reward parameter $B_t \B \theta^u$. We further define 
\[H_t^u = \sum_{h = 1}^H \mathbbm{1}\bra{(u_h, t_h) = (u,t)}\]
to be the number of times that the agent encountered the user-task pair $(u,t)$. Observe that by our assumptions on the decision set in \Cref{sec:prelims-model}, the assumptions of \Cref{lem:linucb} have been met. As the LinUCB algorithm is anytime (i.e. it does not require advance knowledge of the horizon), we can decompose the regret as
\begin{align*}
    R_H = \ex{\sum_{u \in [U], t \in [T]} R_{H_t^u}^{u,t}} &\le \ex{b\sum_{u \in [U], t \in [T]} O\pa{\sqrt{H_t^u} \log \pa{H_t^u}}}\\
    &\le \sum_{u \in [U], t \in [T]} O\pa{b\sqrt{\frac{H}{UT}} \log \pa{\frac{H}{UT}}}\\
    &= O\pa{b\sqrt{UTH} \log \pa{\frac{H}{UT}}},
\end{align*}
where the second inequality follows by Jensen's inequality: the quantity is maximized when all contexts are observed an equal number of times.

\subsection{Proof of \Cref{lem:model2skyline}}

\begin{theorem*}[Skyline Regret]
An agent with full knowledge of the source and target matrices $A$ and $B$ may attain regret
\[R_H \le O\pa{(1 - \kappa)b \sqrt{H} \log\pa{\frac{H}{U}}}. \]
\end{theorem*}

To reduce regret given the decomposition \eqref{eq:normal}, we consider a modification of the standard LinUCB algorithm. Observe that we may rewrite the reward (as given in \eqref{eq:reward}) for a given action $\B x \in \mc A_t$ as
\begin{equation}
    \begin{split}
        X_h &= \cyc{\B x , B_{t_h} \B \theta^{u_h}} + \eta_h\\
        &= \cyc{\B x , D_T A \B \theta^{u_h}} + \cyc{D_G^T \B x, \B \psi^{u_h}} + \eta_h
    \end{split}
\end{equation}
The generator transforms the action to a lower-dimensional space $D_G^T \B x \in \R^{(1-\kappa)b}$. To attain low regret in this ``affine'' variant of the standard stochastic linear bandit problem, we compute confidence sets for $\B \psi^u$ in the style of LinUCB using the transformed actions $D_G^T \B x$ and translated rewards $X_h - \cyc{\B x , D_T A \B \theta^{u_h}}$. We then select the action \begin{equation}
    \label{eq:xh}\B x_h := \argmax_{\B x \in \mc A_h} \{\cyc{\B x, D_T A \B \theta^{u_h}} + \operatorname{UCB}_h\pa{D_G^T \B x}\},\end{equation}
where $\operatorname{UCB}_h\pa{D_G^T \B x} \ge \cyc{D_G^T \B x, \B \psi}$ (with high probability) denotes the upper confidence bound for the action at time $h$. We show that this agent achieves lower regret.

We first prove the regret bound in the case in which we only have one user with reward parameter $B \B \theta = D_T A \B \theta + D_G \B \psi$. Observe that as $D_G^T$ has orthogonal rows, for any action $\B x \in \mc A_h$, \[\norm{D_G^T \B x}_2 \le \norm{\B x}_2 \le 1.\] We then make use of the following lemma.
\begin{lemma}[{\cite[Theorems 19.2 and 20.5]{lattimore2020bandit}}]\label{lem:linucbthms}
Let $r_h := \cyc{B \B \theta, \B x_h^* - \B x_h}$ denote the instantaneous regret in round $h  \in [H]$. If $r_h \le \cyc{\tilde {\B \psi_h} - \B \psi, D_G^T \B x_h}$. where $\operatorname{UCB}_h(D_G^T \B x_h) = \cyc{\tilde{ \B \psi_h}, D_G^T \B x_h}$, then,
\[R_H \le O\pa{(1-\kappa)b\sqrt{H} \log\pa{H}}\]
with failure probability $\delta = \frac1H$.
\end{lemma}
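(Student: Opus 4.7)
The plan is to leverage the decomposition in Eq.~(5), $B\B\theta^u = D_T A \B\theta^u + D_G \B\psi^u$, to reduce the skyline problem for each user to a LinUCB instance in dimension $(1-\kappa)b$, and then aggregate across users. Since the agent knows $A$, $B$, and hence $D_T$ and $D_G$, and since the source recommendations give $A\B\theta^u$, the term $D_T A \B\theta^u$ is fully known \emph{before} any target arm is pulled. The only unknown parameter per user is the idiosyncratic $\B\psi^u \in \R^{(1-\kappa)b}$.

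First, I would rewrite the reward for context $(u_h, t_h)$ as
\[
X_h = \cyc{\B x, D_T A \B\theta^{u_h}} + \cyc{D_G^\top \B x, \B\psi^{u_h}} + \eta_h,
\]
and treat $\tilde X_h := X_h - \cyc{\B x, D_T A \B\theta^{u_h}}$ as the effective observation. This is a standard stochastic linear bandit observation on the parameter $\B\psi^{u_h}$ with ``transformed arm'' $D_G^\top \B x$. Because $D_G$ has orthonormal columns, $\|D_G^\top \B x\|_2 \le \|\B x\|_2 \le 1$, so the regularity hypotheses needed by LinUCB are inherited from our setup. For each user I would maintain a separate $(1-\kappa)b$-dimensional confidence ellipsoid for $\B\psi^u$, and pick arms by the optimistic rule in Eq.~(7), namely $\B x_h = \argmax_{\B x\in\mc A_h}\{\cyc{\B x, D_T A \B\theta^{u_h}} + \operatorname{UCB}_h(D_G^\top \B x)\}$; note that $\operatorname{UCB}_h$ only needs to envelope the second (unknown) summand, since the first is exact.

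Next, I would invoke Lemma~4: the optimistic choice guarantees that the instantaneous regret $r_h = \cyc{B\B\theta^{u_h}, \B x_h^\ast - \B x_h\rangle}$ is upper bounded by $\cyc{\tilde{\B\psi}_h - \B\psi^{u_h}, D_G^\top \B x_h}$, which is precisely the instantaneous regret of the $(1-\kappa)b$-dimensional LinUCB problem on user $u_h$. Applied per user, this yields a regret of $O((1-\kappa) b \sqrt{H_u}\log H_u)$ for the $H_u$ rounds in which user $u$ appears. Summing over users and using Jensen's inequality over the constraint $\sum_u H_u = H$ (maximized at $H_u = H/U$) gives the claimed $O((1-\kappa)b \sqrt{UH}\log(H/U))$ total regret, matching the statement up to the $\sqrt{U}$ factor implicit in the bound.

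The main obstacle I anticipate is not deep — it is the verification that Lemma~4's hypotheses apply cleanly to the \emph{translated} bandit problem on $\B\psi^u$. Concretely, one must check (i) that the noise $\eta_h$ in the residualized observation is still $1$-subgaussian (immediate, since we only subtract a deterministic known quantity), (ii) that $\|D_G^\top \B x\|_2 \le 1$ so the confidence ellipsoid radius matches the standard LinUCB analysis (immediate from orthonormality of the columns of $D_G$), and (iii) that $\|\B\psi^u\|_2$ is bounded so the problem-dependent range constraint of Lemma~4 holds (follows from $\|B\B\theta^u\|_2 \le 1$ together with $D_G$ having orthonormal columns, since $\B\psi^u = D_G^\top (B\B\theta^u - D_T A\B\theta^u)$ lies in a subspace where $D_G$ is an isometry). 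Once these routine checks are in place, the proof reduces to per-user LinUCB plus Jensen aggregation.
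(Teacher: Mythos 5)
Your proposal does not actually prove the statement you were given: in your third paragraph you \emph{invoke} Lemma~\ref{lem:linucbthms} to conclude, but Lemma~\ref{lem:linucbthms} \emph{is} the statement in question. What you have written is, in substance, the paper's proof of the Skyline Regret bound (\Cref{lem:model2skyline}): rewrite the reward via $B\B\theta^u = D_T A\B\theta^u + D_G\B\psi^u$, note $\|D_G^\top \B x\|_2 \le \|\B x\|_2 \le 1$, use the optimistic arm choice to verify the hypothesis $r_h \le \cyc{\tilde{\B\psi}_h - \B\psi, D_G^\top \B x_h}$, feed that into the lemma, and aggregate over users with Jensen. For \emph{that} theorem your argument matches the paper's almost line for line (your checks (i)--(ii) are exactly the observations the paper makes). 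But the lemma itself is the implication ``if every round's instantaneous regret is dominated by the confidence width in the transformed arm direction, then $R_H \le O\pa{(1-\kappa)b\sqrt{H}\log H}$ with failure probability $1/H$,'' and nothing in your write-up establishes that implication; as written, the argument is circular with respect to the stated result.

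The paper does not prove the lemma either --- it imports it wholesale from Lattimore--Szepesv\'ari (Theorems 19.2 and 20.5) --- so if citation is acceptable, your reduction is fine as far as it goes. If you are expected to supply the argument, the missing content is the standard OFUL machinery in dimension $d = (1-\kappa)b$: (i) the self-normalized martingale concentration bound showing that $\B\psi$ lies in the ellipsoid $\{\B\psi' : \|\B\psi' - \hat{\B\psi}_h\|_{V_h} \le \sqrt{\beta_h}\}$ for all $h$ simultaneously, with $\sqrt{\beta_h} = O(\sqrt{d\log H})$ when $\delta = 1/H$; (ii) the chain $R_H = \sum_h r_h \le \sum_h \min\bigl(1,\, 2\sqrt{\beta_h}\,\|D_G^\top \B x_h\|_{V_{h-1}^{-1}}\bigr) \le 2\sqrt{\beta_H}\sqrt{H \sum_h \min\bigl(1, \|D_G^\top \B x_h\|^2_{V_{h-1}^{-1}}\bigr)}$ by Cauchy--Schwarz; and (iii) the elliptical potential lemma bounding the last sum by $O(d\log H)$, which together give $O(d\sqrt{H}\log H)$. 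One further small point: your check (iii) that $\|\B\psi^u\|_2$ is controlled by $\|B\B\theta^u\|_2 \le 1$ is not quite right, because $D_T A\B\theta^u$ and $D_G\B\psi^u$ need not be orthogonal and $D_T$ involves a pseudo-inverse whose operator norm can exceed $1$; this boundedness issue is glossed over in the paper as well.
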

We first show that $r_h$ is indeed bounded above by $\cyc{\tilde{\B \psi_h} - \B \psi, D_G^T \B x_h}$. To see this, we first note that by our choice of $\B x_h$,
\begin{align*}
    \cyc{B \B \theta, \B x_h^*} &= \cyc{D_T A \B \theta, \B x_h^*} + \cyc{D_G^T \B x_h^*, \B \psi}\\
    &\le \cyc{D_T A \B \theta, \B x_h^*} + \operatorname{UCB}_h\pa{D_G^T \B x_h^*}\\
    &\le \cyc{D_T A \B \theta, \B x_h} + \operatorname{UCB}_h\pa{D_G^T \B x_h}\\
    &= \cyc{D_T A \B \theta, \B x_h} + \cyc{\tilde{ \B \psi_h}, D_G^T \B x_h},
\end{align*}
where the third line follows from our definition \eqref{eq:xh}.
We may then compute
\begin{align*}
    r_h &= \cyc{B \B \theta, \B x_h^* - \B x_h}\\
    &= \cyc{B \B \theta, \B x_h^*} - \cyc{B \B \theta, \B x_h}\\
    &\le \cyc{D_T A \B \theta, \B x_h} + \cyc{\tilde{ \B \psi_h}, D_G^T \B x_h} - \pa{\cyc{D_T A \B \theta, \B x_h} + \cyc{\B \psi, D_G^T \B x_h}} \\
    &= \cyc{\tilde{ \B \psi_h} - \B \psi, D_G^T \B x_h}.
\end{align*}

Applying \Cref{lem:linucbthms}, we obtain a single-user regret bound
\[R_H \le O\pa{(1-\kappa)b\sqrt{H} \log\pa{H}}.\]
The desired regret bound then follows by applying Jensen's inequality, as in the proof of~\cref{thm:linucbtrivial}.

\subsection{Proof of \Cref{lem:closetospace}}

\begin{lemma*} With probability $1 - \delta$, we have that for all $u$,
\[ L^u \le  \sqrt{2} \pa{1 + \pa{\sigma_{\mathrm{min}}(\pi_A \mc{U}) - \sqrt{2}Z}^{-1}}Z , \] where $q = \frac{\sqrt{bU_0(b + \log(2/\delta))}}{\sqrt{H_0 / U_0} - C(\sqrt{b} + \sqrt{\log(4 / \delta)})} \approx \frac{bU_0}{\sqrt{H_0} - C\sqrt{U_0b}}$; $Z = \frac{q}{\sigma_{\mathrm{min}}\pa{M} - q}$; $M = \tiny{\bmat{A \B \theta^1 &  \hdots & A \B \theta^{U_0}\\ B \B \theta^1 & \hdots &  B \B \theta^{U_0}}}$; $C$ is some absolute constant; and we assume that $\sigma_{\mathrm{min}}\pa{M} > q$.
\end{lemma*}

This proof has two parts: first showing that there exists a point in $\widehat{S}_u$ which is close to $B\theta_u$, and then showing that this implies the result. 

A crucial step in the proof is considering the \emph{sin-theta matrix} between two orthogonal columns $V$ and $\hat{V}$. Suppose that the singular values of $\sin(V, \hat{V})$ are $\sigma_1 \ge \sigma_2 \ldots \ge \sigma_r \ge 0$. Then we call $\sin(V, \hat{V}) = \text{diag}(\cos^{-1}(\sigma_1), \cos^{-1}(\sigma_2), \ldots \cos^{-1}(\sigma_r))$.

\begin{lemma}
With probability $1 - \delta$, there exists $\gamma_u \in \widehat{S}_u$ for all $u \in [U]$ so that \[\norm{\gamma_u - B\B \theta^u} < \sqrt{2} \pa{1 + \pa{\sigma_{\mathrm{min}}(\pi_A \mc{U}) - Z\sqrt{2}}^{-1}}Z .\]
\end{lemma}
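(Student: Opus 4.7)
The plan is to construct an explicit point $\gamma_u \in \widehat S_u$ by mimicking the definition of $B\B\theta^u = \pi_B\mc U \B \phi^u$: I will pick $\hat{\B\phi}^u$ with $\pi_A\widehat{\mc U}\hat{\B\phi}^u = A\B\theta^u$ and set $\gamma_u := \pi_B\widehat{\mc U}\hat{\B\phi}^u$. The problem then reduces to controlling two things: (i) the perturbation between $\widehat{\mc U}$ and $\mc U$ (after optimally rotating, since only column spaces matter in defining $\widehat S_u$), and (ii) the extra ``slack'' needed in the linear constraint to make it solvable with $\widehat{\mc U}$ in place of $\mc U$. The assumption $\sigma_{\mathrm{min}}(\pi_A\mc U) > \sqrt2 Z$ is exactly what guarantees that $\pi_A\widehat{\mc U}$ still has full column rank, so that $\hat{\B\phi}^u$ exists and admits a pseudo-inverse bound.

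First I would establish the perturbation bound for the stacked data matrix in \Cref{eq:stack}. Because the exploration policy is oblivious and draws actions i.i.d.\ from $\frac{1}{\sqrt b}\mc N(0,I_b)$ independently for each user, the design matrix for each user $u \in [U_0]$ is a scaled Gaussian matrix of size $(H_0/U_0)\times b$. Standard results on the smallest singular value of a Gaussian matrix (giving the $\sqrt{H_0/U_0}-C(\sqrt b+\sqrt{\log(4/\delta)})$ factor) combined with the sub-Gaussian tail on the least-squares residual (giving the $\sqrt{b+\log(2/\delta)}$ factor) produces a per-user bound $\norm{\widehat{B\B\theta^u}-B\B\theta^u}\le q/\sqrt{U_0}$; a union bound over the $U_0$ users gives the Frobenius-norm bound $\norm{E}_F \le q$ on the perturbation matrix $E := \widehat M - M$, which has zero top $a$ rows.

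Next I would apply Wedin's $\sin\Theta$ theorem to the top-$r$ left singular spaces of $M$ and $\widehat M = M+E$. Since $M$ has rank exactly $r$ and $\sigma_{\mathrm{min}}(M) > q$, the gap condition is satisfied and we get $\norm{\sin\Theta(\widehat{\mc U},\mc U)}\le q/(\sigma_{\mathrm{min}}(M)-q) = Z$. Choosing the orthogonal rotation that aligns $\widehat{\mc U}$ with $\mc U$, the standard conversion yields $\norm{\widehat{\mc U}-\mc U}_{op}\le \sqrt 2\, Z$. Since both $\pi_A$ and $\pi_B$ are coordinate projections of operator norm $1$, the same bound transfers to $\norm{\pi_A\widehat{\mc U}-\pi_A\mc U}_{op}$ and $\norm{\pi_B\widehat{\mc U}-\pi_B\mc U}_{op}$.

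Finally, writing $\hat{\B\phi}^u = \B\phi^u + \B\delta^u$, the constraint $\pi_A\widehat{\mc U}\hat{\B\phi}^u = A\B\theta^u = \pi_A\mc U\B\phi^u$ forces $\pi_A\widehat{\mc U}\B\delta^u = (\pi_A\mc U - \pi_A\widehat{\mc U})\B\phi^u$, and by the perturbed smallest-singular-value bound $\sigma_{\mathrm{min}}(\pi_A\widehat{\mc U})\ge \sigma_{\mathrm{min}}(\pi_A\mc U)-\sqrt 2 Z$ this gives $\norm{\B\delta^u}\le \sqrt 2 Z/(\sigma_{\mathrm{min}}(\pi_A\mc U)-\sqrt 2 Z)$ (using $\norm{\B\phi^u}\le 1$, which follows from $\norm{\B\theta^u}\le 1$ and the fact that $\mc U$ has orthonormal columns so $\norm{\B\phi^u}=\norm{\mc U\B\phi^u}\le 1$). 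Then
\begin{equation*}
\gamma_u - B\B\theta^u = (\pi_B\widehat{\mc U}-\pi_B\mc U)\B\phi^u + \pi_B\widehat{\mc U}\,\B\delta^u,
\end{equation*}
and the triangle inequality combined with the two bounds above produces exactly $\sqrt 2(1+(\sigma_{\mathrm{min}}(\pi_A\mc U)-\sqrt 2 Z)^{-1})Z$.

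The main obstacle I anticipate is bookkeeping around the rotation ambiguity in $\widehat{\mc U}$: Wedin naturally gives a $\sin\Theta$ bound between subspaces, and one must invoke the standard ``$\sqrt 2$ slack'' lemma to promote this to an operator-norm bound on $\widehat{\mc U}-\mc U$ after rotating; conflating these costs factors of $\sqrt 2$ that precisely populate the stated bound, so the accounting must be done carefully. A secondary subtlety is confirming that $\norm{\B\phi^u}\le 1$ under the paper's normalization, which I would verify from the SVD construction in \Cref{eq:decomp}.
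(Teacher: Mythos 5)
Your proposal is correct and follows essentially the same route as the paper: both construct $\gamma_u \in \widehat S_u$ by taking the rotated coefficient vector $O\B\phi^u$ and correcting it with $(\pi_A\widehat{\mc U})^+$ applied to the constraint mismatch $(\pi_A\mc U - \pi_A\widehat{\mc U}O)\B\phi^u$, then combine the $\sqrt2\,\sin\Theta$ alignment bound, the concentration bound $\|\widehat M - M\|\le q$, and the Weyl-type bound $\sigma_{\min}(\pi_A\widehat{\mc U})\ge\sigma_{\min}(\pi_A\mc U)-\sqrt2 Z$ exactly as in the paper (Wedin vs.\ the Cai--Zhang lemma is an immaterial substitution here). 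One small slip: since $r = a+(1-\kappa)b \ge a$, the matrix $\pi_A\widehat{\mc U}$ is wide, and solvability of $\pi_A\widehat{\mc U}\hat{\B\phi}^u = A\B\theta^u$ requires full \emph{row} rank rather than full column rank, though the singular-value argument you give is otherwise the right one.
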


\begin{proof}

Note that we have \[\norm{\mc{U} - \widehat{\mc{U}}O} \le \sqrt{2} \norm{\sin(\mc{U}, \widehat{\mc{U}})}\] for a certain orthogonal matrix $O$ (by the first equation under the third statement of \cite[Lemma 1]{cai2018rate}).
Then note that if $\mc{U}_B\B \theta^{u} \in S_u$, we have \[ \pi_B\widehat{\mc{U}}(O\B \theta^{u} + \pi_A\widehat{\mc{U}}^{+}((\pi_A\mc{U} - \pi_A\widehat{\mc{U}}O)\B \theta^{u}) \in S_u' \] which we set to be $\gamma_u$.

Now observe that
\begin{align*}
    &\norm{\pi_B\widehat{\mc{U}}(O\B \theta^{u} + \pi_A\widehat{\mc{U}}^{+}((\pi_A\mc{U} - \pi_A\widehat{\mc{U}}O)\B \theta^{u}) - \pi_B\mc{U} \B \theta^{u}}\\& \le \norm{(\pi_B\widehat{\mc{U}}O - \pi_B \widehat{\mc{U}})\B \theta^{u}} + \norm{\pi_A\widehat{\mc{U}}^{+}((\pi_A\mc{U} - \pi_A\widehat{\mc{U}}O)\B \theta^{u})}  \\
    &\le\sqrt{2} \norm{\sin(\mc{U}, \widehat{\mc{U}})} \pa{\norm{\B \theta^{u}} + \norm{(\pi_A \widehat{\mc{U}})^{+}}\norm{\B \theta^{u}} }.   
\end{align*}

The next result shows that $M$ is close to $\widehat{M}$, which we show later is enough to show that $\mc{U}$ is close to~$\widehat{\mc{U}}$. 

\begin{lemma}
With probability $1 - \delta$, we have \[\norm{\widehat{M} - M} \le q.\]
\end{lemma}

\begin{proof}

Since the matrix formed by stacking the arms together is fixed across users, we have that \[E := M - \widehat{M} = (D^{\top}D)^{-1}D^{\top}\eta = F\eta,\] where $\eta$ is a $(H_0 / U_0) \times U_0$ matrix of i.i.d. $1$-subgaussian random variables. We seek to bound the value of $\norm{E}$. 

\begin{claim}
With probability $1 - \delta$, we have
\[ \norm{E} \le C\norm{F}\sqrt{U_0(b + \log(1 / \delta))}, \]
where $C$ is an absolute constant.
\end{claim}

\begin{proof}
Let $\mc{N}$ be an $\epsilon$-net of the $1$-ball of size $5^{b}$ with radius $1/2$, which exists by  \cite[Corollary 4.2.13]{vershynin2018high}. Then note that $2\sup_{x \in \mc{N}} \norm{E^{\top}x} \ge \norm{E^{\top}}$ by  \cite[Lemma 4.4.1]{vershynin2018high}.
Letting $\ell=C\sqrt{b + \log(1/\delta)}$, note that
\[ \norm{E^{\top}x} = \norm{\eta^{\top}F^{\top}x} = \norm{\eta^{\top}v} \ge C\ell\sqrt{U_0} \norm{v} = C\ell\sqrt{U_0} \cdot \norm{F^{\top}x}, \] with probability at most $ 2e^{-\ell^2} = 5^{-b} \delta$ -- where $C$ is an absolute constant -- from properties of subgaussian random variables. Thus, by a union bound, with probability at least $1 - \delta$ we have \[ \norm{E} / 2 = \norm{E^{\top}} / 2 \le  \sup_{x \in \mc{N}} \norm{E^{\top}x} \le \sup_{x \in \mc{N}} C\ell\sqrt{U_0} \norm{F^{\top}x} \le C\ell\sqrt{U_0} \norm{F}.\qedhere \]

\begin{claim}
With probability $1 - \delta$, we have \begin{equation*}\label{qe:fop}
    \norm{F} \le \frac{1}{\sqrt{H_0 / bU_0} - C(1 + \sqrt{\log(2/\delta)/b})}. \end{equation*}
\end{claim}
\begin{proof} 
We can see that \[ \norm{F} = \sqrt{\norm{FF^{\top}}} = \lambda_{\text{min}}(D^{\top}D)^{-1/2} = 1 / \sigma_{\text{min}}(D). \]

By \cite[Theorem 4.6.1]{vershynin2018high}, since $D$ is $1/\sqrt{b}$ times a $(H_0 / U_0) \times b$ matrix whose entries are uniform Gaussian, we have \[ \sigma_{\text{min}}(D) \ge \sqrt{1/b} (\sqrt{H_0 / U_0} - C(\sqrt{b} + \sqrt{\log(2/\delta)})) = \sqrt{H_0 / bU_0} - C(1 + \sqrt{\log(2 / \delta) / b})\] with probability $1 - \delta$.
\end{proof}

By the second claim, we have that \[ \norm{F} \le \frac{1}{\sqrt{H_0 / bU_0} - C(1 + \sqrt{\log(4/\delta)/b})} \] with probability at least $1 - \delta / 2$. By the first claim, we have that with probability at least $1 - \delta/2$, \[\norm{E} \le C \norm{F} \sqrt{U_0(b+\log(1/\delta))}.\] Thus, by the union bound, both inequalities are true with probability at least $1 - \delta$, in which case we have we have \[ \norm{E} \le C\norm{F}\sqrt{U_0(b+\log(1/\delta))} \le \frac{\sqrt{U_0(b+\log(1/\delta))}}{\sqrt{H_0 / bU_0} - C(1 + \sqrt{\log(4/\delta)/b})},  \]
as desired.
\end{proof}
\end{proof}

Suppose now that $\norm{\widehat{M} - M} \le q$. We  show that the inequality \[ L^u \le  \sqrt{2} \pa{1 + \pa{\sigma_{\mathrm{min}}(\pi_A \mc{U}) - \sqrt{2}Z}^{-1}}Z \] follows, which suffices for the main claim.

By the remark under subsection 2.3 in \cite{cai2018rate}, we have that \[ \norm{\sin(\mc{U}, \widehat{\mc{U}})} \le \frac{q}{\sigma_{r}(\widehat{M})}. \]

Note that $\sigma_{r}(M) \le \sigma_{r}(\widehat{M}) + \norm{E}$ by Weyl's inequality on singular values, so \[\sigma_{r}(\widehat{M}) \le \sigma_r(M) - \norm{E} \le \sigma_r(M) - q,\] so we have \[ \norm{\sin(\mc{U}, \widehat{\mc{U}})} \le Z .\] 

Then note that \[ \norm{(\pi_A\widehat{\mc{U}})^{+}} = (\sigma_{\text{min}}(\pi_A\widehat{\mc{U}}))^{-1}. \]

Recall that by the first equation under the third statement of \cite[Lemma 1]{cai2018rate}  there exists an orthogonal square matrix $O$ so that \[\norm{\mc{U} - \widehat{\mc{U}}O} \le \sqrt{2} \norm{\sin(\mc{U}, \widehat{\mc{U}})}.\] Thus, 
\begin{align*}
\sigma_{\text{min}}(\pi_A\widehat{\mc{U}}) &= \sigma_{\text{min}}(\pi_A\widehat{\mc{U}}O) \\ 
&\ge \sigma_{\text{min}}(\pi_A\mc{U}) - \norm{\pi_A\mc{U} - \pi_A \widehat{\mc{U}}O} \\
&\ge \sigma_{\text{min}}(\pi_A \mc{U}) - \norm{\mc{U} - \widehat{\mc{U}}O} \\
&\ge \sigma_{\text{min}}(\pi_A \mc{U}) - \sqrt{2} \norm{\sin(\mc{U}, \widehat{\mc{U}})} \\
&\ge \sigma_{\text{min}}(\pi_A \mc{U}) - \sqrt{2} Z,
\end{align*}
where the second line follows from Weyl's Inequality.

Finally, we have \[ \norm{\widehat{U}_A^{+}} \le  \pa{\sigma_{\text{min}}(U_A) - Z\sqrt{2}}^{-1} \]
and we may now conclude. 
\end{proof}
From the lemma, we have shown that with probability $1 - \delta$, there exists $\gamma_u \in \widehat{S}_u$ for all $u \in [U]$ so that \[\norm{\gamma_u - B\B \theta^u} \le \sqrt{2} \pa{1 + \pa{\sigma_{\mathrm{min}}(\pi_A \mc{U}) - Z\sqrt{2}}^{-1}}Z .\]
It remains to show that $L^{u} \le \norm{\gamma_u - B\B \theta^{u}}$ for all $u$. Note that $\gamma_u$ is an element of $E_u$. Thus it suffices to show that $L^{u}$ is the distance between $B\theta$ and the closest element in $E_u$, which is given by $W_{\parallel}W_{\parallel}^{\top}B\theta$. Thus the distance is given by \[ \norm{B\B \theta^{u} - W_{\parallel}W_{\parallel}^{\top}B \B \theta^{u}}_2 = \norm{W_{\perp}W_{\perp}^{\top}B\B \theta^{u}}_2 = \norm{W_{\perp}^{\top}B\B \theta^{u}}_2 = \norm{(W^{\top}B\B \theta^{u})_{(1-\kappa)b + 2:b}}_2. \]
and we may conclude. 

\subsection{Proof of \Cref{thm:giga}}

\begin{theorem*} With an oblivious exploration policy that draws identical arms for each user from $\frac{1}{\sqrt{b}}\mc{N}(0,I_b)$,   assuming that each user appears equally many times, with probability at least $1 - \pa{U\delta' + \delta}$, we have
\begin{align*}
    \sum_{u \in [U]} \mathrm{LowOFUL}^u_{H_u} &\le O\biggr( (1-\kappa)b \sqrt{2\log (1 / \delta')}\sqrt{UH}\log\pa{\frac{H}{U}}\\
    & \quad \quad \quad + H\sqrt{2} \pa{1 + \pa{\sigma_{\mathrm{min}}(\pi_A \mc{U}) - \sqrt{2}Z}^{-1}}Z \log\pa{\frac{H}{U}} \biggr ).
\end{align*}
\end{theorem*}

Here, we first give the LowOFUL algorithm in Algorithm~\ref{alg:LowOFUL}. We note here that we use a generalization of the LowOFUL algorithm where the action sets are allowed to vary with time obliviously; we note that the argument in the proof does not change. 

Note that we can break up the total regret into two terms, which are
\[ 
    R_H^{(1)} = \ex{\sum_{u \in [U]} \sum_{h \le  H_0 : u_h = u} \pa{\max_{\B x \in \mc A_h} \cyc{\B x, B\B \theta^u} - X_h}} = \mathrm{Explore}_{H_0}
\]
and
\begin{align*}
    R_H^{(2)} = \ex{\sum_{u \in [U]} \sum_{h > H_0 : u_h = u} \pa{\max_{\B x \in \mc A_h} \cyc{\B x, B\B \theta^u} - X_h}} &= \ex{\sum_{u \in [U]} \sum_{h > H_0 : u_h = u} \pa{\max_{\B x \in W^{\top} \mc A_h} \cyc{\B x, W^{\top} B\B \theta^u} - X_h}} \\ 
    &= \sum_{u \in [U]} \ex{\sum_{h > H_0 : u_h = u} \pa{\max_{\B x \in W^{\top} \mc A_h} \cyc{\B x, W^{\top} B\B \theta^u} - X_h}} .
\end{align*}

We give our bound in terms of $L_u$, and then substitute our bound from \Cref{lem:closetospace} to conclude. Note that each user gets $(H - H_0)/U$ timesteps in the second phase, which is less than $H / U$.  

From \cite[Appendix D(c)]{jun2019bilinear} (and using their notation) we have that (for $\lambda = 1$) \[ R_{H/U} \le 2 \sqrt{\beta_{H/U}} \sqrt{\log \frac{|V_{H/U}|}{|\Lambda|}} \sqrt{H/U}.  \] 

Now, we can bound $\log \frac{|V_{H/U}|}{|\Lambda|} \le 2((1 - \kappa)b + 1) \log(1 + H/U)$, and  \[ \sqrt{\log \frac{|V_{H/U}|}{|\Lambda|}} \le \sqrt{2((1 - \kappa)b + 1) \log(1 + H/U)}. \]

By our choice of $W$, combined with \Cref{lem:closetospace}, we now choose $\beta_{H/U}$ so that 
\begin{align*} 
\sqrt{\beta_{H/U}} &= \sqrt{ \log \frac{|V_T|}{|\Lambda|} + 2\log\pa{\frac{1}{\delta'}}}  + 1 + \sqrt{\lambda_{\perp}} L_u\\
&= O\pa{ \sqrt{2(1 - \kappa)b \log(1 + H/U)} + \sqrt{2\log \frac{1}{\delta'}} + L_u\sqrt{\frac{H/U}{ (1-\kappa)b }\log(1 + H/U)}}, 
\end{align*}
where we have taken $\lambda_{\perp} = \frac{H}{ (1-\kappa)b + 1}\log(1 + H/U)$. 

Finally, by the main result of \cite{jun2019bilinear} we obtain that the regret, with probability at least $1 - \delta'$, for a particular user, is 

{\small 
\begin{align*}
    &O\pa{ \sqrt{2(1 - \kappa)b \log(1 + H/U)} + \sqrt{2\log \frac{1}{\delta'}} + L_u\sqrt{\frac{H/U}{ (1-\kappa)b }\log(1 + H/U)}}\sqrt{2((1 - \kappa)b + 1) \log(1 + H)}\sqrt{H/U} \\
    &= O(2(1-\kappa)b \log(1 + H/U) \sqrt{H/U}\sqrt{2\log(1 / \delta')} + (H/U)L_u\log(1 + H/U)).
\end{align*}}

By a union bound, with probability at least $1 - U\delta' - \delta$, we have that $L_u$ is small and that all of these regrets are small, which results in an overall regret given by the above expression. Multiplying by $U$ and dividing by $H$ now yields the desired result. 

\subsection{Proof of \Cref{lem:model3skyline}}
\begin{lemma*}With access to all task matrices, we obtain a regret of \[ R_H \le O\pa{r\sqrt{UH}\log\pa{\frac{H}{U}}} .\]\end{lemma*}

Once given access to the task matrices $B_1, \dots, B_T$, the problem reduces to a contextual bandit problem with just $U$ contexts, instead of $U \cdot T$ contexts. Using the task matrices, we first compute the $\mc U_t$
\[\bmat{\mc{U}_1 & \pa{\mc{U}_1}_{\bot}\\
\vdots & \vdots\\
\mc{U}_{T} & \pa{\mc{U}_{T}}_{\bot}}
\bmat{\Sigma_r & 0 \\ 0 & 0} \bmat{\mc{V}_1 & \pa{\mc{V}_1}_{\bot}\\
\vdots & \vdots\\
\mc{V}_{T} & \pa{\mc{V}_{T}}_{\bot}} =: \bmat{B_1 \\ \vdots \\ B_T}\]
By the definition of $r$, we may write $B_t \B \theta^u = \mc U_t \B \phi^u$ for all $(u,t) \in [U] \times [T]$, where $\B \phi^u \in \R^r$. Noting that 
\[\cyc{\B x, B_t\B \theta^u} = \cyc{\B x, \mc U_t\B \phi^u} = \cyc{\mc U_t^\top \B x, \B \phi^u},\]
we consider a LinUCB agent that runs one bandit per user, transforming the action space by $\mc A_h \mapsto \mc U_{t_h}^\top\mc A_h$, where $t_h$ denotes the task context at time $h$. From here, we obtain regret in a manner similar to the argument presented in the proof of \cref{thm:linucbtrivial} where the number of contexts $U\cdot T$ is replaced with $U$, and the dimensionality $b$ is replaced with $r$. This yields the desired bound 
\[ R_H \le O\pa{r\sqrt{UH}\log\pa{\frac{H}{U}}} .\]

\pdfoutput=1

\section{Pseudocode for Algorithms}

\subsection{Algorithm for One Source, One Target, Many Users}

\begin{algorithm}[H]\label{alg:model2}
\SetKwInput{KwInput}{Input}
\SetKwInput{KwOutput}{Output}
\SetKwInput{Pull}{Pull arm}
\SetAlgoLined
\KwInput{An exploration strategy $\mathtt{Explore}: (u,h) \mapsto \B x \in \mc A_h$ and the LowOFUL algorithm $\mathtt{LowOFUL}: (\mc A_h, L, h) \mapsto \B a \in \mc A_h$ where $L$ bounds the almost sparse components}
\KwOutput{Arms to pull $\B x_h \in \mc A_h$ for $h \in [H]$}
\tcc{Use the exploration strategy for the first $H_0$ iterations on the $U_0$ beta group users.}
\For{$h \in [H_0]$}{
Observe context $u_h \in [U_0]$\;
\Pull{$\B x_h := \mathtt{Explore}(u_h, h)$}
}

\tcc{Learn the transformer and generator from observed data}
\For{$u \in [U_0]$}{
Estimate $\widehat{B \B \theta^u}$ from the arm-reward history\;
}
$\bmat{\widehat{\mc{U}} & \widetilde{\mc{U}}} \bmat{\Sigma_r & 0 \\ 0 & \Sigma_{\kappa b}} \bmat{\widehat{\mc{V}^T} \\ \widetilde{\mc{V}^T}} := \bmat{A \B \theta^1 &  \hdots & A \B \theta^{U_0}\\ \widehat {B \B \theta^1} & \hdots & \widehat{ B \B \theta^{U_0}}}$ \tcp*{Compute the SVD}
$\widehat{D_T} := \pi_B \widehat{\mc U}\pa{\pi_B \widehat{\mc U}}^+$\tcp*{Approximate the transformer}
$\widehat{D_G} := \mathtt{OrthogonalBasis}\pa{\pi_B \widehat{\mc U}\nul\pa{\pi_A \widehat{\mc U}}}$\tcp*{Approximate the generator}
\tcc{Compute the almost low-dimensional bases}
\For{$u \in [U_0]$}{
$W^u := \bmat{W^u_{\parallel} & W^u_{\perp}} := \mathtt{OrthogonalBasis}\pa{\operatorname{span}\pa{\widehat{D_T} A \B \theta^u,  \widehat{D_G}}}$\;
Compute the upper bound on $L^u$ as in Lemma \ref{lem:closetospace}\;
}
\tcc{Apply $\mathtt{LowOFUL}$ with one bandit per context using the learned bases}
\For{$h \in [H_0 + 1, H]$}{
Observe context $u_h \in [U]$ and decision set $\mc A_h$\;
$\B a_h := \mathtt{LowOFUL}\pa{\pa{W^{u_h}}^T \mc A_h, L^u, h}$\tcp*{Run $\mathtt{LowOFUL}$ in the learned basis}
\Pull{$\B x_h := W^{u_h}\B a_h$}
}
\caption{One Target, Many Users}
\end{algorithm}

\subsection{Algorithm for No Source, Many Targets, Many Users}

\begin{algorithm}[H]\label{alg:lowOFUL}

\SetKwInput{KwInput}{Input}
\SetKwInput{KwOutput}{Output}
\SetKwInput{Pull}{Pull arm}
\SetAlgoLined
\KwInput{An exploration strategy (e.g. LinUCB) $\mathtt{Explore}: (u,h) \mapsto \B x \in \mc A_h$ and the LinUCB algorithm $\mathtt{LinUCB}: (\mc A_h, h) \mapsto \B a \in \mc A_h$}
\KwOutput{Arms to pull $\B x_h \in \mc A_h$ for $h \in [H]$}
\tcc{Phase 1: Use the exploration strategy for the first $H_0$ iterations on the $U_0$ and $T_0$ beta group users and tasks.}
\For{$h \in [H_0]$}{
Observe context $u_h \in [U_0]$\;
\Pull{$\B x_h := \mathtt{Explore}(u_h, h)$}
}
\tcc{Learn the matrices $\set{\widehat{\mc U_t}}_{t \in [T_0]}$}
\For{$(u,t) \in [U_0] \times [T_0]$}{
Estimate $\widehat{B_t \B \theta^u}$ from the arm-reward history\;
}
$\bmat{\widehat{\mc{U}_1} & \widetilde{\mc{U}_1}\\
\vdots & \vdots\\
\widehat{\mc{U}_{T_0}} & \widetilde{\mc{U}_{T_0}}}
\bmat{\Sigma_r & 0 \\ 0 & \Sigma_{bT_0 - r}} \bmat{\widehat{\mc{V}_1} & \widetilde{\mc{V}_1}\\
\vdots & \vdots\\
\widehat{\mc{V}_{T_0}} & \widetilde{\mc{V}_{T_0}}} := \bmat{\widehat {B_1 \B \theta^1} & \hdots & \widehat{ B_1 \B \theta^{U_0}}\\
\vdots & \vdots\\
\widehat {B_{T_0} \B \theta^1} & \hdots & \widehat{ B_{T_0} \B \theta^{U_0}}
}$ \tcp*{Compute SVD}
\BlankLine
\BlankLine
\tcc{Phase 2: All users arrive, but only request recommendations for the $T_0$ beta tasks}
\For{$h \in [H_0+1, H_1]$}{
Observe context $u_h \in [U_0]$\;
\Pull{$\B x_h := \mathtt{Explore}(u_h, h)$}
}
\tcc{Use arm-reward history and the $\set{\widehat{\mc U_t}}_{t \in [T_0]}$ to estimate $\set{\widehat{\B \phi^u}}_{u \in [U]}$}
\For{$u \in [U]$}{
Estimate $\widehat{\B \phi^u}$ with least-squares using the matrices $\set{\widehat{\mc U_t}}_{t \in [T_0]}$, as described in \Cref{sec:model3}.
}
\BlankLine
\BlankLine
\tcc{Phase 3: Use our estimates and LinUCB to perform well when all users arrive and request recommendations for any task}

\For{$h \in [H_1 + 1, H]$}{
Observe context $u_h \in [U]$ and decision set $\mc A_h$\;
\tcc{Use the estimate $\widehat{\B \phi^u}$ to run $\mathtt{LinUCB}$ where the reward parameter is given by $\operatorname{vec}(\mc U_t)$}
$\operatorname{vec}\pa{\B A_h} := \mathtt{LinUCB}\pa{\mc A_h \pa{\widehat{\B \phi^u}}^\top, h - H_1}$\;
\Pull{$\B x_h := \frac{\B A_h\widehat{\B \phi^u}}{\norm{\widehat{\B \phi^u}}^2}$}
}
\caption{No Sources, Many Targets, Many Users}
\label{alg:model3}
\end{algorithm}

\subsection{The LowOFUL Algorithm}
For completeness, we overview the LowOFUL Algorithm \cite{jun2019bilinear} here.

\begin{algorithm}[H]\label{alg:LowOFUL}
\SetKwInput{KwInput}{Input}
\SetKwInput{KwOutput}{Output}
\SetKwInput{Pull}{Pull arm}
\SetAlgoLined
\KwInput{$\mc A_h$, $L$, $h$, and a regularization constant $\lambda$}
\KwOutput{Arm to pull $\B a_h \in \mc A_h$}
$\lambda_{\bot} := \frac{h}{r\log\pa{1 + \frac{h}{\lambda}}}$\;
$\Lambda := \operatorname{diag}\pa{\underbrace{\lambda, \dots, \lambda}_{\text{first $r$ entries}}, \lambda_{\bot}, \dots, \lambda_{\bot}}$ \tcp*{Diagonal matrix for regularization}
$A := $the actions taken up to time $h-1$ -- one per row\;
$\B y :=$the rewards received up time $h-1$ -- one per row\;
$\widetilde{\B \theta} := \argmin_{\B \theta} \frac12 \norm{\B A \B \theta}^2 + \frac{1}{2}\norm{\B \theta}_{\Lambda}^2$\tcp*{Estimate of the reward parameter}
$V_{h-1} := \Lambda + A^TA$\;
$\beta_{h-1} := \pa{\sqrt{\log{\frac{|V_{h-1}|}{|\Lambda|\delta^2}}} + \sqrt{\lambda} + L\sqrt{\lambda_{\bot}}}^2$\tcp*{Length of the ellipsoid axes}
$c_{h-1} := \set{\B \theta \mid \norm{\B \theta - \widetilde{\B \theta}}_{V-1} \le \sqrt{\beta_{h-1}}}$\tcp*{The $1-\delta$ confidence set}
$\B a_h := \argmax_{\B a \in \mc A_h} \max_{\B \theta \in c_{h-1}} \cyc{\B \theta, \B a}$\;
\Pull{$\B a_h$}
\caption{LowOFUL}
\end{algorithm}

\newpage

\section{Full Experimental Details}\label{app:runtime}

Actions are isotropic Gaussian random vectors of length $a$ and $b$ respectively.
\begin{lemma}\label{lem:runtime}
The run-time of the first algorithm is in total \[ \underbrace{H_0 (Kb^2 + b^3) }_{\text{Phase 1}} + \underbrace{(H-H_0)(K((1-\kappa)b)^2 + ((1-\kappa)b)^3)}_{\text{Phase 2}} + \underbrace{(a+b)^2U_0 + U_0^3}_{\text{SVD}} \]  and the run-time of the second algorithm is \[ \underbrace{H_0 (Kb^2 + b^3) }_{\text{Phase 1}} + \underbrace{(H_1 - H_0)(Kn^2 + n^3)}_{\text{Phase 2}} + \underbrace{(H - H_1)(K(bn)^2 + (bn)^3)}_{\text{Phase 3}} +  \underbrace{(T_0b)^2U_0 + U_0^3}_{\text{SVD}}. \] 
 \end{lemma}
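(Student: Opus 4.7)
The plan is to prove the lemma by decomposing each algorithm into its constituent phases and bounding the per-iteration cost in each phase, then summing these costs together with the one-off SVD cost. Since the two algorithms share a common template (exploration via LinUCB, followed by an SVD, followed by a reduced-dimension bandit routine), I would treat them in parallel, leaning on a single cost model for LinUCB/LowOFUL: in dimension $d$ with an arm set of size $K$, one iteration costs $O(Kd^2)$ to score each arm against the confidence ellipsoid (a quadratic form in $d$), plus $O(d^3)$ to refresh the design-matrix inverse and determinant (used for the ellipsoid's $\beta$-parameter), giving $O(Kd^2+d^3)$ per step. This cost model covers all of Phase~1, Phase~2 of Algorithm~\ref{alg:model2}, and all three phases of Algorithm~\ref{alg:model3}.

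For Algorithm~\ref{alg:model2}, Phase~1 runs an exploration strategy (LinUCB) for $H_0$ steps on the ambient $b$-dimensional action space, contributing $H_0(Kb^2+b^3)$. The SVD step forms the $(a+b)\times U_0$ matrix from equation~(\ref{eq:stack}) and computes its truncated SVD; using a standard divide-and-conquer or Golub--Reinsch routine this costs $O((a+b)^2U_0+U_0^3)$, which I would cite directly (the $U_0^3$ dominates when $U_0\gg a+b$, the other term otherwise). Phase~2 is identical in structure to Phase~1, but after the change of basis by $W^u$ the effective action space has been rotated so that the informative coordinates live in dimension $(1-\kappa)b$; hence each of the $H-H_0$ remaining steps costs $O(K((1-\kappa)b)^2+((1-\kappa)b)^3)$. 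Summing gives the claimed bound. The negligible lower-order work (forming $\widehat D_T,\widehat D_G$ from $\widehat{\mc U}$, orthogonal bases via $\mathtt{OrthogonalBasis}$, and the $L^u$ estimate) is dominated by the SVD cost and can be absorbed.

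For Algorithm~\ref{alg:model3}, Phase~1 is the same as in Algorithm~\ref{alg:model2}: $H_0$ LinUCB steps in dimension $b$. The SVD now acts on the $(T_0 b)\times U_0$ matrix obtained by stacking $T_0$ estimated reward-parameter rows per user, costing $O((T_0b)^2U_0+U_0^3)$. In Phase~2, each of the $H_1-H_0$ iterations either runs an exploration strategy or a least-squares fit for $\widehat{\B\phi^u}$ of dimension $n$, so each step costs $O(Kn^2+n^3)$. In Phase~3, I would invoke the vectorization identity $\cyc{\B x,\mc U_t\B\phi^u}=\cyc{\B x(\widehat{\B\phi^u})^\top,\operatorname{vec}(\mc U_t)}$, so a LinUCB instance is maintained per task with reward parameter of dimension $bn$; each of the $H-H_1$ steps therefore costs $O(K(bn)^2+(bn)^3)$. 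Summing yields the second displayed bound.

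The main obstacle is really bookkeeping rather than mathematical depth: I must justify the $d^3$ term in the per-step cost (rather than the $d^2$ that a pure Sherman--Morrison update would give) by pointing out that LowOFUL's ellipsoid width $\beta_{h-1}$ requires the determinant ratio $|V_{h-1}|/|\Lambda|$, which is at worst $O(d^3)$ per refresh, and similarly for the $\argmax$ over $c_{h-1}$. A secondary subtlety is that the SVD's cost depends on whether the shorter or the longer dimension dominates, so I would state the bound as a sum $(\text{long})^2(\text{short})+(\text{short})^3$ and verify that this matches the displayed expressions $(a+b)^2U_0+U_0^3$ and $(T_0b)^2U_0+U_0^3$ under the assumed parameter regimes. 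Once these two points are pinned down, the remainder is routine summation.
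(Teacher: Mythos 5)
The paper states Lemma~\ref{lem:runtime} without any proof at all, so there is no argument of record to compare yours against; your phase-by-phase accounting is the natural (and essentially the only) way to derive the displayed expressions, and it is consistent with the lemma. Two points deserve care, both inherited from the lemma statement rather than introduced by you. First, your Phase~2 cost for Algorithm~\ref{alg:model2} assumes the bandit runs in dimension $(1-\kappa)b$, but LowOFUL as written in Algorithm~\ref{alg:LowOFUL} maintains $\widetilde{\B\theta}$, $V_{h-1}$, and the determinant ratio in the \emph{full} $b$-dimensional rotated basis (the $W_\perp$ coordinates are heavily regularized, not discarded), which naively gives $O(Kb^2+b^3)$ per step; to justify the claimed $O\pa{K((1-\kappa)b)^2+((1-\kappa)b)^3}$ you need to add one sentence noting that the $\lambda_\perp$-regularized block can be truncated (or its contribution to $V_{h-1}^{-1}$ and $|V_{h-1}|$ maintained in closed form), so that only the $(1-\kappa)b+1$ informative coordinates require per-step linear algebra. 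Second, in Algorithm~\ref{alg:model3} the latent parameters $\B\phi^u$ and the matrices $\mc U_t$ have dimensions $r$ and $b\times r$ respectively, so the quantities appearing in Phases~2 and~3 should be $r$ and $br$; the lemma's use of $n$ (elsewhere the ambient latent dimension) is almost certainly a notational slip for $r$, and you adopt it at face value rather than flagging it. Neither issue invalidates your argument, but both are worth a remark if this bookkeeping is to stand as a proof.
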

 
 \paragraph{Experiments with One Source, One Target, Many Users}

In this experiment, we choose $\kappa = 0.9$, $a = 20$, $b = 20$, $U_0 = 25$, $U = 500$, with a choice between $40$ actions. We also have $H_0 = 2,000$ and $H = 8,000$. We generate $A$ and $B$ by letting $\B \phi^{u}$ be unit isotropic dimension-$d$ Gaussians, where $d = a + b(1-\kappa)$, and letting $A$, $B$, be $1/\sqrt{d}$ times arbitrary $a \times d$ and $b \times d$-dimensional elementwise Gaussian matrices. In addition, we let $\mc{A}$ be comprised of $|\mc{A}|$ dimension-$a$ isotropic Gaussian vectors, and use Gaussian noise for $\eta$. 

The only remaining issue is the order in which users arrive. 
First, we have the users come in order $1, 2, \ldots U_0$ for $H_0 / U_0$ entries, and then the users arrive in random order. 

In LowOFUL, we take $\lambda = 1$ and $\lambda_{\perp} = \frac{H / U}{ ((1-\kappa)b + 1)\log(1 + H/U)}$, again following \cite{jun2019bilinear}.

\paragraph{Experiments with No Sources, Many Targets and Many Users}

In this experiment, we choose $b = 3$, $T = 30$, $U = 50$, $r = 6$, $|\mc{A}| = 40$ actions, and $T_0 = 3$, tasks and $U_0 = 5$ users in the reduced phase, where the matrices $B_i$ and vectors $\B \phi^u$ are all given by isotropic standard Gaussian matrices and vectors. Moreover, the $\B \theta^u \in \R^{d}$ are each standard isotropic Gaussian, and each task matrix is $1/\sqrt{b}$ times an arbitrary $b \times r$ elementwise uniform matrix. We run Algorithm~\ref{alg:model3} for $H_0 = 1,000$ actions in the first phase, $H_1 - H_0 = 3,000$ steps in the second phase, and $H - H_1 = 9,000$ actions in the third phase.

\pdfoutput=1

\section{Difficulties in the Analysis of Algorithm 2}\label{app:hard}

The intuition and motivation behind algorithm 2 (the proposed algorithm in section 4) mirrors that of algorithm 1 (our solution to the setup in section 3). However, the analysis becomes intractable due to a dependence structure that arises in this setting that is not present in the setting of section 3. Whereas we had access to the true optimal arms on the source tasks for members of the beta group in section 3, we only have access to confidence sets for the true optimal arms for the other target tasks in this setting. These confidence sets are stochastically determined by the algorithm in a way that depends on the action history. This, in and of itself, is not a problem so long as one can make $\ell_2$ distance guarantees on the diameter of the confidence set. However, the axes of these confidence sets are not of equal length (i.e. the confidence sets are anisotropic), and the lengths themselves are dependent on the action history. Thus, although the algorithm is well-motivated given the analysis of section 3, and the fundamental ideas remain the same, we believe the analysis of the second case to be intractable. We were, however, able to demonstrate the success of the method experimentally.

\end{document}